%% file: paper.tex
\documentclass{article}

\PassOptionsToPackage{compress}{natbib}

\usepackage{natbib}
\usepackage[main, final]{neurips_2026}

\usepackage[utf8]{inputenc} 
\usepackage[T1]{fontenc}    
\usepackage{hyperref}       
\usepackage{url}            
\usepackage{booktabs}       
\usepackage{amsfonts}       
\usepackage{nicefrac}       
\usepackage{microtype}      
\usepackage{xcolor}         

\usepackage{amsthm}
\usepackage{subcaption}

\usepackage{amsmath}
\usepackage{hyperref}
\usepackage{url}
\usepackage{newfloat}
\usepackage{listings}
\usepackage{amsmath}
\usepackage{pifont}
\usepackage[noend]{algpseudocode}
\usepackage{algorithm}
\usepackage{graphicx}
\usepackage{adjustbox}
\usepackage{enumitem}
\usepackage[table]{xcolor}
\usepackage{colortbl}
\usepackage{array}
\usepackage{wrapfig}
\usepackage{xurl}
\usepackage{booktabs} 
\hypersetup{
    colorlinks=true,
    linkcolor=red,
    filecolor=magenta,      
    urlcolor=magenta,
    citecolor=teal,
    pdftitle={MoEZip: Routing-Aware KV Cache Compression for Sparse Mixture-of-Experts LLMs},
    pdfpagemode=FullScreen,
    }
\usepackage{listings}
\usepackage{subcaption}
\usepackage{verbatim}
\usepackage{algpseudocode}
\usepackage{bm}
\usepackage{amssymb}
\usepackage{multirow}

\newtheorem{proposition}{Proposition}

\definecolor{Ivory}{HTML}{FFFFF0}
\definecolor{Lavender}{HTML}{F5F5FA}
\definecolor{AliceBlue}{HTML}{F0F8FF}
\definecolor{Seashell}{HTML}{FFF5EE}

\title{MoEZip: Routing-Aware KV Cache Compression for Sparse Mixture-of-Experts LLMs}

\author{%
  Minsang Kim \\
  Korea University\\
  SK Telecom\\
  \texttt{kmswin1@korea.ac.kr} \\
  \And
  Seung Jun Baek\thanks{Corresponding Author} \\
  Korea University\\
  \texttt{sjbaek@korea.ac.kr} \\
}

\begin{document}
\maketitle
\begin{abstract}
Mixture-of-Experts (MoE) language models scale model capacity efficiently by activating a small subset of parameters per token. However, in long-context inference, memory remains a major bottleneck because the KV cache grows linearly with sequence length. Existing KV cache compression methods reduce this cost for dense transformers, but overlook a problem specific to MoE: KV eviction can change which experts are selected during answer generation, degrading the generation quality. We propose MoEZip, a routing-aware KV cache compression for sparse MoE language models. To estimate the sensitivity of expert routing to perturbation, we propose a novel metric based on the Fisher information matrix of routing distributions. MoEZip combines this metric with attention weights for routing-aware scoring of contexts. Finally, we compose a complementary scoring system that combines eviction methods for dense and sparse-MoE architectures. Experiments show that MoEZip achieves robust performance under aggressive cache compression on various benchmarks. At a KV retention ratio of 10\%, MoEZip improves over the SoTA baseline by +24 and +20 points on average on LongBench and RULER, corresponding to 2.3$\times$ and 4.9$\times$ higher scores, respectively. The source code is available at \href{https://github.com/kmswin1/MoEZip}{https://github.com/kmswin1/MoEZip}.
\end{abstract}

\input{introduction}

\input{preliminaries}

\input{method}

\input{experiment}

\input{related}

\input{conclusion}

\bibliography{paper}

\input{appendix}

\input{checklist}

\end{document}

%% file: introduction.tex
\section{Introduction}\label{sec:intro}

Large language models (LLMs)~\cite{brown2020language,grattafiori2024llama} have enabled substantial advances across a wide range of real-world applications, including chatbot assistants~\cite{achiam2023gpt,bai2022training,comanici2025gemini}, coding agents~\cite{li2025minimax,zeng2026glm}, robot agents~\cite{intelligence2025pi_,kim2024openvla}, and broader LLM-based agent systems~\cite{shinn2023reflexion,madaan2023self,yao2022react}. Mixture-of-Experts (MoE) architectures have emerged as an effective solution for scaling LLM capacity while controlling inference cost~\cite{fedus2022switch,du2022glam,liu2024deepseek}. By activating only a small subset of parameters for each token, MoE models can scale total capacity without incurring the full computation cost of dense models. This favorable scaling property has made MoE a promising architecture for large-scale deployment~\cite{deepseekai2025deepseekv32,team2025kimi}.

However, efficient inference remains challenging in long-context and agentic settings. Many real-world applications require processing long interaction histories~\cite{jimenez2023swe,zhou2023webarena}, retrieved documents~\cite{lewis2020retrieval}, tool outputs~\cite{schick2023toolformer}, or multimodal observations~\cite{liu2023libero,zheng2024gpt}. In these scenarios, the key bottleneck is often memory rather than activated parameter count: the KV cache grows linearly with sequence length and quickly becomes a major obstacle for scalable inference.

To address this issue, a body of work recently studied KV cache compression and eviction methods, which reduce memory usage while preserving generation quality~\cite{li2024snapkv,cai2024pyramidkv,zhang2023h2o,kim2025kvzip,zandieh2025turboquant}. Existing approaches span diverse strategies, from attention-based token selection methods such as SnapKV~\cite{li2024snapkv} to methods based on context reconstruction such as KVZip~\cite{kim2025kvzip}. Although these methods have shown promising results on dense transformers, applying them to MoE models can be problematic. In sparse-MoE architectures, changes in KV representation due to eviction can affect router decisions that determine which experts are activated. This has been shown to aggravate performance degradation under high compression. KV compression can therefore incur MoE-specific problems, which are not captured by eviction methods designed for dense architectures.

\noindent\textbf{Contributions.}
In this paper, we consider KV cache compression for sparse MoE models from the perspective of \emph{routing fragility}: it is defined as the sensitivity of routers due to perturbation induced by KV eviction. To estimate fragility, we propose a novel metric based on the Fisher information matrix of routing distributions. We derive the metric for two popular types of MoE routing: softmax-type and sigmoid-type routing. The metric is combined with the attention weights of instruction tokens to produce the fragility score that estimates the importance of context positions. Finally, we propose a scoring system that combines the importance scores from compression methods for dense and sparse-MoE architectures. Our complementary approach enables compression robust to perturbation on both context saliency and routing stability. Experiments show that MoEZip achieves robust performance on various benchmarks such as LongBench, NIAH, and RULER. MoEZip is especially strong at aggressive compression, e.g., at a KV retention ratio of 10\%, MoEZip outperforms the current SoTA method, KVZip \cite{kim2025kvzip}, by +24 and +20 points on average on LongBench and RULER, corresponding to 2.3$\times$ and 4.9$\times$ higher scores, respectively. Further analysis shows that MoEZip achieves good performance for both softmax-type and sigmoid-type routing models and is compatible with diverse compression methods originally developed for dense Transformer LLMs.

%% file: preliminaries.tex
\section{Preliminaries: Mixture-of-Experts Models and Prefix KV Compression}
\label{sec:prelim}

\noindent\textbf{Sparse MoE language models with prefix KV cache.}
Let $x_{1:L}$ denote a prefix sequence and consider autoregressive decoding at step $t$.
We focus on sparse Mixture-of-Experts (MoE) transformer language models, where a subset of transformer layers,
denoted by $\mathcal{M}$, replaces the standard feed-forward block with an MoE block.
At each MoE layer $\ell\in\mathcal{M}$, a router selects a small subset of experts from a pool of $E$ experts.

Meanwhile, autoregressive decoding uses a prefix KV cache to avoid recomputing attention over past tokens.
At transformer layer $\ell$ and attention head $h$, the cached prefix key/value pairs are
\[
\{(k_{\ell,h,i}, v_{\ell,h,i})\}_{i=1}^{L}.
\]
Given the current query vector $q_{t,\ell,h}$, the causal attention probability to prefix position $i$ is
\begin{equation}
\alpha_{t,\ell,h,i}
=
\mathrm{softmax}_i\!\left(
\frac{\langle q_{t,\ell,h}, k_{\ell,h,i}\rangle}{\sqrt{d}}
\right),
~~
\sum_{i=1}^{L}\alpha_{t,\ell,h,i}=1,
\label{eq:prelim_alpha_head}
\end{equation}
where causal masking is implied.
Each head produces the pooled value
\begin{equation}
u_{t,\ell,h}
=
\sum_{i=1}^{L}\alpha_{t,\ell,h,i}\,v_{\ell,h,i}.
\label{eq:prelim_attn_pool_head}
\end{equation}
After merging multi-head outputs and applying the output projection, we denote the resulting post-attention hidden state by $u_{t,\ell}$. This representation is then fed to the downstream blocks, including the MoE router, when layer $\ell$ is sparse.

\noindent\textbf{Prefix KV Eviction.} Although the prefix KV cache accelerates autoregressive decoding, its memory cost grows linearly with the prefix length. To reduce memory and decoding overhead, recent methods compress the cache by evicting or pruning a subset of prefix KV entries. Formally, let $\mathrm{KV}$ denote the full prefix cache and $\mathrm{KV}^{(S)}$ denote the retained cache indexed by $S\subseteq\{1,\dots,L\}$. Let $f_\text{LM}(\cdot|C)$ denote the LLM generation conditional on cache $C$. The goal of prefix KV eviction is to find a subset $S$ such that decoding with the pruned cache remains close to decoding with the full cache:
\begin{equation}
f_{\mathrm{LM}}(\,\cdot \mid \mathrm{KV}^{(S)}\,)
\approx
f_{\mathrm{LM}}(\,\cdot \mid \mathrm{KV}\,).
\label{eq:prelim_kv_goal}
\end{equation}
In dense transformers, this problem is often addressed by retaining KV entries that receive large attention mass. However, in sparse MoE models, cache eviction has an additional mode of failure: it can perturb router decisions and thereby change which experts are selected during decoding.

\noindent\textbf{Routing functions.}  In a sparse MoE layer $\ell\in\mathcal{M}$, the expert routing for each token is determined as follows. The routing is a function of logits denoted by
\begin{equation}
z_{t,\ell}\in\mathbb{R}^{E}.
\end{equation}
where we assume a linear router head $
z_{t,\ell} = R_\ell u_{t,\ell}$
where $u_{t,\ell}$ denotes the post-attention hidden state defined in \eqref{eq:prelim_attn_pool_head}. We consider two popular types of expert routing: softmax and sigmoid. 

\noindent\emph{(1) Softmax-type routing.} We define the routing vector $g$ and the associated distribution $p$:
\begin{equation}
g_{t,\ell}
:=
\mathrm{softmax}(z_{t,\ell})
\in \Delta^{E-1},\quad 
p(x|z_{t,\ell}) := x^\top g_{t,\ell}=\frac{\exp(x^\top z_{t,\ell})}{\sum_e \exp(z_{t,\ell,e})}
\label{eq:prelim_softmax_gate}
\end{equation}
where $g$ induces the distribution $p$ over $E$-dimensional one-hot vector $x$. Thus, under distribution $p$, expert $e$ is selected with probability $g_{t,\ell,e}$ with the $e$-th component of $x$ being 1 under this categorical family.

\noindent\emph{(2) Sigmoid-type routing.} A sigmoid gate is applied independently to each expert logit:
\begin{equation}
g_{t,\ell}
:=
\sigma(z_{t,\ell})
\in (0,1)^E,\quad p(x|z_{t,\ell}) := \prod_{e=1}^E g^{x_e}_{t,\ell,e} (1-g_{t,\ell,e})^{1-x_e}
\label{eq:prelim_sigmoid_gate}
\end{equation}
where $\sigma(\cdot)$ is applied elementwise, and $x$ is a 0-1 vector in $\{0,1\}^E$. Under distribution $p$, router $e$ is selected with probability $g_{t,\ell,e}$ independently of others under this Bernoulli family. 

\noindent\textbf{Router perturbation.}\label{sec:router_perturb} The router logits $z_{t,\ell}$ depend on the attention output $u_{t,\ell}$. Pruning prefix KV entries therefore perturbs $u_{t,\ell}$, which propagates to $z_{t,\ell}$, the routing vector $g_{t,\ell}$, and the routing distribution $p$. The resulting shift in expert selection can degrade generation quality. We design an experiment to observe this effect.

\begin{wrapfigure}[13]{r}{0.42\textwidth}
    \vspace{-1.0cm}
    \centering
    \includegraphics[width=0.42\textwidth]{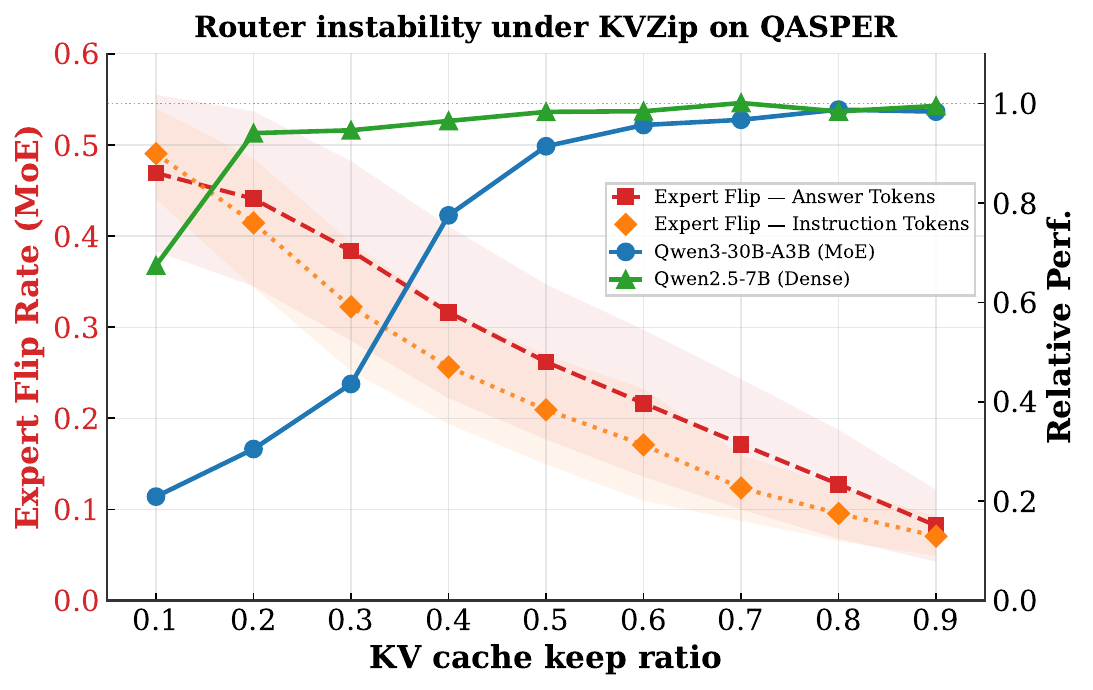}
    \vspace{-0.6cm}
    \caption{\textbf{Router instability is associated with answer-quality degradation.} As KV compression becomes more aggressive, expert selection changes more often, and performance drops accordingly.}
    \label{fig:router_flip}
    \vspace{-0.9cm}
\end{wrapfigure}

Let \(e_{t,\ell}^{\mathrm{full}}\) and \(e_{t,\ell}^{\mathrm{comp}}\) denote the selected expert for token \(t\) at MoE layer \(\ell\) under the full and compressed KV caches, respectively.  Thus, \(e_{t,\ell}^{\mathrm{full}}\) and \(e_{t,\ell}^{\mathrm{comp}}\) differ \emph{only} because of the underlying context KV cache. Let \(\mathcal{I}\) and \(\mathcal{A}\) denote the instruction- and answer-token positions. For any token subset \(\mathcal{T} \subseteq \mathcal{I}\cup\mathcal{A}\), we define the expert-flip rate as
\begin{equation}
\mathrm{Flip}(\mathcal{T})
=
\frac{1}{|\mathcal{T}|\,|\mathcal{M}|}
\sum_{t \in \mathcal{T}}
\sum_{\ell \in \mathcal{M}}
\mathbf{1}
\left[
e_{t,\ell}^{\mathrm{comp}}
\neq
e_{t,\ell}^{\mathrm{full}}
\right],
\label{eq:flip_rates}
\end{equation}
where \(\mathcal{M}\) denotes the set of MoE layers, and we write \(\mathrm{Flip}(\mathcal{I})\) and \(\mathrm{Flip}(\mathcal{A})\) for the rates restricted to instruction- and answer-token positions, respectively. 

To isolate the effect of KV eviction on routing from generation drift, we feed the \emph{same} token sequence to the model under two conditions, with and without prefix KV eviction, where the input concatenates a context prefix with the instruction and teacher-forced ground-truth answer tokens, and compare the selected experts at matched positions. Figure~\ref{fig:router_flip} reports $\mathrm{Flip}(\mathcal{I})$, $\mathrm{Flip}(\mathcal{A})$, and relative performance, which is the task score under compressed KV divided by the full-cache score, on QASPER across keep ratios. We make the following observations:
\begin{enumerate}
    \item The quality of the MoE model drops sharply as compression becomes aggressive, while the dense Qwen2.5-7B-Instruct~\cite{qwen25} degrades far less under the same KVZip compression~\cite{kim2025kvzip}.
    \item The answer-token flip rate \(\mathrm{Flip}(\mathcal{A})\) rises with the quality drop. This indicates a highly suboptimal expert routing and is likely the key cause of output degradation.
    \item  The instruction-token flip rate \(\mathrm{Flip}(\mathcal{I})\) tracks \(\mathrm{Flip}(\mathcal{A})\) closely, and  two rates converge further under aggressive compression.
\end{enumerate}

The last observation is crucial: \(\mathrm{Flip}(\mathcal{A})\) can only be measured after answer generation under the evicted cache, whereas \(\mathrm{Flip}(\mathcal{I})\) is available immediately after processing the instruction, before KV eviction and answer generation. Thus, instruction-token signals can serve as a useful predictor of routing failure. Later, we leverage this signal for MoE-specific KV compression.

%% file: method.tex
\section{Method: Routing-Aware Prefix KV Compression}\label{sec:method}

\noindent\textbf{Overview.} The expert selection of future answer tokens is unavailable at KV eviction time. Thus, MoEZip uses instruction-token router states as a proxy for estimating routing fragility. Specifically, we measure the routing fragility of instruction tokens via the Fisher trace of the router distribution, and combine this fragility with instruction-to-context attention to score the routing-aware importance of each context position. Context positions are ranked by this score, and the top-scoring positions are reserved as \emph{routing-aware anchors}. Within the per-(layer, head) KV retention budget $B$, MoEZip first allocates a portion to anchors and then lets a \emph{base scorer} (e.g., KVZip~\cite{kim2025kvzip}, SnapKV~\cite{li2024snapkv}) select the remaining positions by contextual saliency. Thus, MoEZip changes only the composition of the retained cache while keeping the total cache size and decoding procedure unchanged.

\begin{figure*}[t!]
    \centering
    \includegraphics[width=1.0\linewidth]{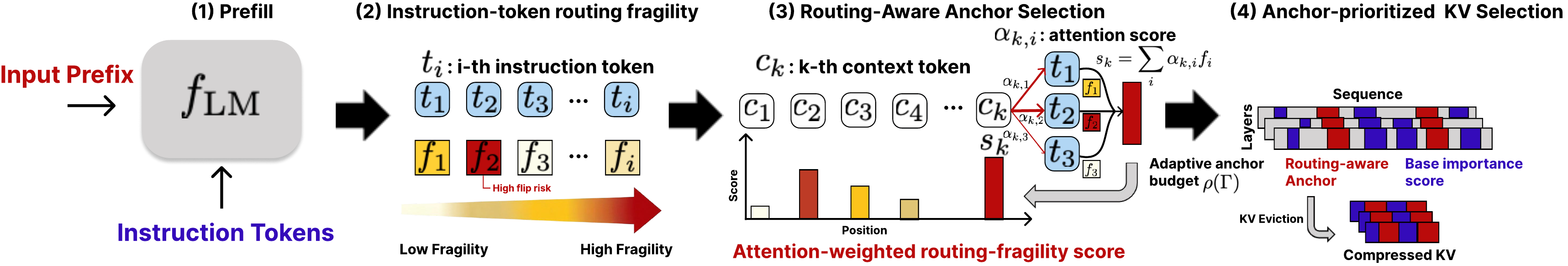}
    \caption{\textbf{Overview of MoEZip.} MoEZip estimates routing fragility from instruction-token router states, transfers it to context positions via instruction-to-context attention, and reserves the top-scoring positions as routing-aware anchors. The base scorer fills the remaining budget, and thus the final cache size and decoding procedure remain unchanged.}
\label{fig:method_overview}
    \label{fig:moezip}
\end{figure*}

\noindent\textbf{Router deviation under logit perturbations.}
To quantify the sensitivity of routing to eviction-induced perturbations, we consider the KL divergence
\begin{equation}
D\!\left(p(\cdot|z)\,\|\,p(\cdot|z+\delta z)\right)
\label{eq:method_divergence}
\end{equation}
for perturbation $\delta z$ on router logit $z$. The divergence captures how much the router distribution changes under a perturbation to its input logits. However, the perturbation $\delta z$ is unavailable before KV selection, since it depends on which KV entries are removed. $\delta z$ depends on which context positions are removed and how those removals cause representation shift propagating into the input router logits. Thus, \eqref{eq:method_divergence} is not directly usable as a pre-selection criterion. Instead, we use a tractable direction-agnostic proxy based on the current router distribution. The following proposition introduces such a metric, which is also easy to compute.

\begin{proposition}
\label{prop:fisher_trace}
Suppose $\delta z\in\mathbb{R}^E$ is a random vector with \(
\mu_{\delta z} = \mathbf{0}\) and \(\Sigma_{\delta z}= \sigma^2 I\). 
For sufficiently small $\delta z$, we have that
\begin{equation}
\mathbb E_{\delta z}\!\left[
D\!\left(p(\cdot|z)\,\|\,p(\cdot|z+\delta z)\right)
\right]
\propto
\operatorname{trace}\!\left(F(z)\right)
\label{eq:expected_kl_trace}
\end{equation}

where $F(z)$ is the Fisher information matrix of $p(\cdot|z)$. For router logit $z_{t,\ell}$ of token $t$ at layer $\ell$, we define Fisher-trace metric
\begin{equation}
f_{t,\ell}
:=\operatorname{trace}\!\left(F(z_{t,\ell})\right).\label{eq:fragility_def}
\end{equation}
For \textbf{softmax-type routing}, we have that, using $g$ in \eqref{eq:prelim_softmax_gate},
\begin{equation}
f_{t,\ell} = 1-\sum_{e=1}^{E} g_{t,\ell,e}^{2},
\label{eq:fragility_softmax_closed}
\end{equation}
For \textbf{sigmoid-type routing}, we have that, using $g$ in \eqref{eq:prelim_sigmoid_gate},
\begin{equation}
f_{t,\ell} = \sum_{e=1}^{E} g_{t,\ell,e}(1-g_{t,\ell,e}).
\label{eq:fragility_sigmoid_closed}
\end{equation}
\end{proposition}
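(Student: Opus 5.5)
The plan is to expand the KL divergence to second order in $\delta z$, take the expectation to turn the quadratic form into a trace, and then compute $F(z)$ explicitly for the two routing families in \eqref{eq:prelim_softmax_gate} and \eqref{eq:prelim_sigmoid_gate}.

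\textbf{Step 1 (local expansion).} Write $\phi(\delta z):=D(p(\cdot|z)\,\|\,p(\cdot|z+\delta z))=\sum_x p(x|z)\bigl[\log p(x|z)-\log p(x|z+\delta z)\bigr]$. Both families are finite and smooth in $z$, so $\phi$ is smooth near $0$ and $\phi(0)=0$. The gradient is
\[
\nabla\phi(0)=-\mathbb E_{p(\cdot|z)}[\nabla_z\log p(x|z)]=0,
\]
since the score function has zero mean (the sum over $x$ is finite, so differentiating under it is immediate). The Hessian at $0$ is $-\mathbb E[\nabla_z^2\log p(x|z)]=F(z)$. Hence
\[
\phi(\delta z)=\tfrac12\,\delta z^\top F(z)\,\delta z+O(\|\delta z\|^3).
\]

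\textbf{Step 2 (expectation).} Drop the remainder; this is how we read ``sufficiently small $\delta z$'' (equivalently, let $\sigma\to0$, with the cubic term contributing $o(\sigma^2)$ under bounded third moments). The quadratic term then gives
\[
\mathbb E[\delta z^\top F\delta z]=\operatorname{trace}\bigl(F\,\mathbb E[\delta z\delta z^\top]\bigr)=\operatorname{trace}\bigl(F(\Sigma_{\delta z}+\mu\mu^\top)\bigr)=\sigma^2\operatorname{trace}F(z),
\]
using $\mu_{\delta z}=0$ and $\Sigma_{\delta z}=\sigma^2 I$. So the expected KL is $\tfrac{\sigma^2}{2}\operatorname{trace}F(z)$ to leading order, which is \eqref{eq:expected_kl_trace} with a constant that does not depend on $z$. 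This truncation is the only delicate point. A fully rigorous statement would be an asymptotic one, $\sigma^{-2}\,\mathbb E[\phi]\to\tfrac12\operatorname{trace}F$, and it would need a moment bound on $\delta z$ (e.g.\ bounded support or sub-Gaussian tails) to control the remainder uniformly. I would state this assumption explicitly rather than try to avoid it.

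\textbf{Step 3 (closed forms).} For softmax routing, $\log p(x|z)=x^\top z-\log\sum_e e^{z_e}$. Its Hessian is $-(\operatorname{diag}(g)-gg^\top)$, which does not depend on $x$, so $F=\operatorname{diag}(g)-gg^\top$ and
\[
\operatorname{trace}F=\sum_e\bigl(g_e-g_e^2\bigr)=1-\sum_e g_e^2,
\]
using $\sum_e g_e=1$. This gives \eqref{eq:fragility_softmax_closed}. For sigmoid routing, $\log p=\sum_e\bigl[x_e z_e-\log(1+e^{z_e})\bigr]$, since $\log\sigma(z)-\log(1-\sigma(z))=z$. The log-likelihood is separable across experts with Hessian $-\operatorname{diag}\bigl(\sigma'(z_e)\bigr)$, and $\sigma'=g(1-g)$. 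Hence $F=\operatorname{diag}\bigl(g_e(1-g_e)\bigr)$ and its trace is \eqref{eq:fragility_sigmoid_closed}. The definition \eqref{eq:fragility_def} is then simply the case $z=z_{t,\ell}$.
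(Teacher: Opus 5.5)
Your proposal is correct and follows essentially the same route as the paper: a second-order KL expansion, the trace identity $\mathbb{E}[\delta z^\top F\,\delta z]=\operatorname{trace}(F\,\mathbb{E}[\delta z\,\delta z^\top])=\sigma^2\operatorname{trace}F$, and the closed forms $F=\operatorname{diag}(g)-gg^\top$ (softmax) and $F=\operatorname{diag}(g\odot(1-g))$ (sigmoid). The differences are minor: you derive the expansion yourself from the zero-mean score and the Hessian form of $F$, where the paper cites Amari for it. You also make explicit that the proportionality holds only to leading order as $\sigma\to 0$ under a moment condition on $\delta z$, whereas the paper simply drops the $o(\|\delta z\|^2)$ term when it writes $\propto$.
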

The proof is in Appendix~\ref{appendix:router_fisher}. This perturbation model is used to motivate a direction-agnostic local sensitivity proxy, not to characterize the exact distribution of eviction-induced router-logit shifts. Below, we explain the rationale behind the Fisher-trace metric \eqref{eq:fragility_def}.

Firstly, KL divergence admits a quadratic expansion based on the Fisher information matrix $F(z)$~\cite{amari2000methods}:
\begin{equation}
D\!\left(p(\cdot|z)\,\|\,p(\cdot|z+\delta z)\right)\approx \frac{1}{2}(\delta z)^\top F(z)\,(\delta z).
\label{eq:kl_local_expand}
\end{equation} up to the order of $\|\delta z\|^2$. $F(z)$ is positive semi-definite, and thus it has $E$ orthogonal eigenvectors with non-negative eigenvalues. Averaging the quadratic form in \eqref{eq:kl_local_expand} over perturbation directions yields a quantity proportional to the mean eigenvalue of $F$, and hence to $\operatorname{trace}(F)$. Thus, $f$ can be viewed as the sensitivity averaged over all possible perturbation directions. Secondly, the following holds in general for Fisher information matrix:
\begin{align}
\text{trace}(F)\!=\mathbb{E}\left[\sum_e \left(\frac{\partial \log~p(x|z)}{\partial z_e}\right)^2\right] \!= \mathbb{E}\|s\|^2,~~ s:=\left(\frac{\partial \log~p(x|z)}{\partial z_1},\dots,\frac{\partial \log~p(x|z)}{\partial z_E}\right)
\end{align}
i.e., Fisher-trace is the expected magnitude of \emph{score function} $s$ \cite{hyvarinen2005estimation}. The score $s$ is a vector of sensitivity of (log-)distribution to latent parameters. Thus, Fisher-trace is the magnitude of sensitivity of the routing distribution to change in logit $z$. Thirdly, it is easy to compute the Fisher-trace. The complexity of computing $F(z)$ or divergence \eqref{eq:kl_local_expand} is $O(E^2)$. The expressions in \eqref{eq:fragility_softmax_closed} and \eqref{eq:fragility_sigmoid_closed} incur only $O(E)$. Detailed discussion of the overall computational complexity is provided in Appendix \ref{appendix:complexity}.

Alternatively, one can consider the \emph{worst-case perturbation}, whereas the Fisher trace concerns the ``average'' case. If $\delta z$ aligns with the eigenvector with the largest eigenvalue or $\lambda_{\max}(F)$, the divergence \eqref{eq:kl_local_expand} is maximized. We discuss this in Appendix \ref{appendix:worst}, where we derive some bounds on $\lambda_{\max}$ and compare the performances of the average $(\text{trace})$ and the worst-case  $(\lambda_{\max})$ metrics. 

In summary, the Fisher trace $f_{t,\ell}$ is a tractable metric that quantifies how brittle the router's expert selection is at token $t$ and layer $\ell$.

\noindent\textbf{Routing-fragility score.}
\label{sec:q2c_score} Measuring routing fragility on answer tokens would be ideal; however, those tokens are not yet available when KV eviction is performed. Instead, we propose measuring the routing fragility at the instruction tokens. This is a reasonable proxy; as observed in Fig. \ref{fig:router_flip}, answer tokens are affected to a similar degree to instruction tokens because both tokens pertain to a common instruction. A context position is considered important for routing stability when it is strongly attended by instruction tokens with high $f_{t,\ell}$; we rank context positions by this routing-aware score and retain the top-scoring positions. We first aggregate attention over heads:
\begin{equation}
\bar{\alpha}_{t,\ell,i}
=
\max_h \alpha_{t,\ell,h,i}.
\label{eq:head_reduce}
\end{equation}
This max-aggregation retains strong instruction-to-context attention whenever at least one head places substantial mass on position $i$. We then define an attention-propagated routing-fragility score for each context position:
\begin{equation}
r(i) := \sum_{\ell\in\mathcal{M}}\sum_{t\in\mathcal{I}} \bar{\alpha}_{t,\ell,i} f_{t,\ell}.
\label{eq:r_score}
\end{equation}
Intuitively, Eq.~\eqref{eq:r_score} highlights context positions that are both attended to by instruction tokens and feed routers with fragile decisions. 
The KV pairs with the highest values of $r(i)$ are deemed important for routing stability and will be called \emph{anchors}.

\noindent\textbf{Context Scoring with Anchors.}
\label{sec:adaptive_budget}
Let $B$ denote per-layer/head retention budget. MoEZip selects $B_f\leq B$ context positions as anchors based on scores $r(i)$, whereas the rest of $B-B_f$ positions are scored by the base compressor. $B_f$ is adaptively determined by sample-level fragility. Specifically, we threshold the token-level Fisher score

\begin{equation}
\mathbf{1}_{t,\ell} \;:=\; \mathbf{1}\!\left[f_{t,\ell} > \tau\right].
\label{eq:fragile_indicator}
\end{equation}
with a fixed $\tau>0$, and aggregate it into a sample-average routing fragility
\begin{equation}
\Gamma =
\frac{1}{|\mathcal{M}|\,|\mathcal{I}|}
\sum_{\ell\in\mathcal{M}}\sum_{t\in\mathcal{I}} \mathbf{1}_{t,\ell}.
\label{eq:gamma_def}
\end{equation}
which we map to an adaptive anchor budget
\begin{equation}
\rho(\Gamma)
=
\rho_{\min}
+
(\rho_{\max}-\rho_{\min})\Gamma,
\qquad
B_f
=
\left\lfloor{B}\,\rho(\Gamma)\right\rfloor,
\label{eq:bf_def}
\end{equation}
so that fragile samples devote more of $B$ to routing-aware protection while stable ones default to the base signal. The anchor set $\mathcal{A}_\mathrm{anc}$ is then top-$B_f$ context positions in terms of $r(i)$:
\begin{equation}
\mathcal{A}_{\mathrm{anc}}
=
\mathrm{Top}_{B_f}\{r(i): i\in\{1,\ldots,L\}\}.
\label{eq:anchor_set}
\end{equation}
The anchors are reserved within each layer/head budget before applying the base scorer to the remaining positions. Thus, the retained context for layer $\ell$ and head $h$ , or $S_{\ell,h}$ is given by
\begin{equation}
S_{\ell,h}
=
\mathcal{A}_{\mathrm{anc}}
\cup
S^{\mathrm{base}}_{\ell,h}
\left(B-B_f;\{1,\ldots,L\}\setminus \mathcal{A}_{\mathrm{anc}}\right).
\label{eq:anchor_selection}
\end{equation}
where $S_{\ell,h}^{\mathrm{base}}(b;\mathcal{C})$ denotes the base scorer's selection from candidate set $\mathcal{C}$ under budget $b$. Since the total cache size remains $B$ per layer/head, MoEZip alters only the composition of the retained cache, not the budget itself.

\noindent\textbf{Query-agnostic variant.} MoEZip-Recon replaces the downstream instruction with the same fixed reconstruction query used by KVZip and computes routing-aware scores from the router and attention statistics of that reconstruction pass. It requires no additional model forward pass beyond KVZip's reconstruction pass, and the resulting compressed cache can be reused across downstream requests.
The original MoEZip instead conditions anchor selection on the
downstream instruction.

\noindent\textbf{Complexity.} MoEZip adds only a lightweight selection-time computation over the base scorer, while preserving the same retained-cache size and decoding procedure after eviction. Its wall-clock overhead is small, adding 0.88s on Qasper and 0.86s on average across 4K--32K NIAH contexts; details are in Appendix~\ref{appendix:complexity}.

%% file: experiment.tex
\begin{figure}[t!]
    \centering
    \includegraphics[width=1.0\linewidth]{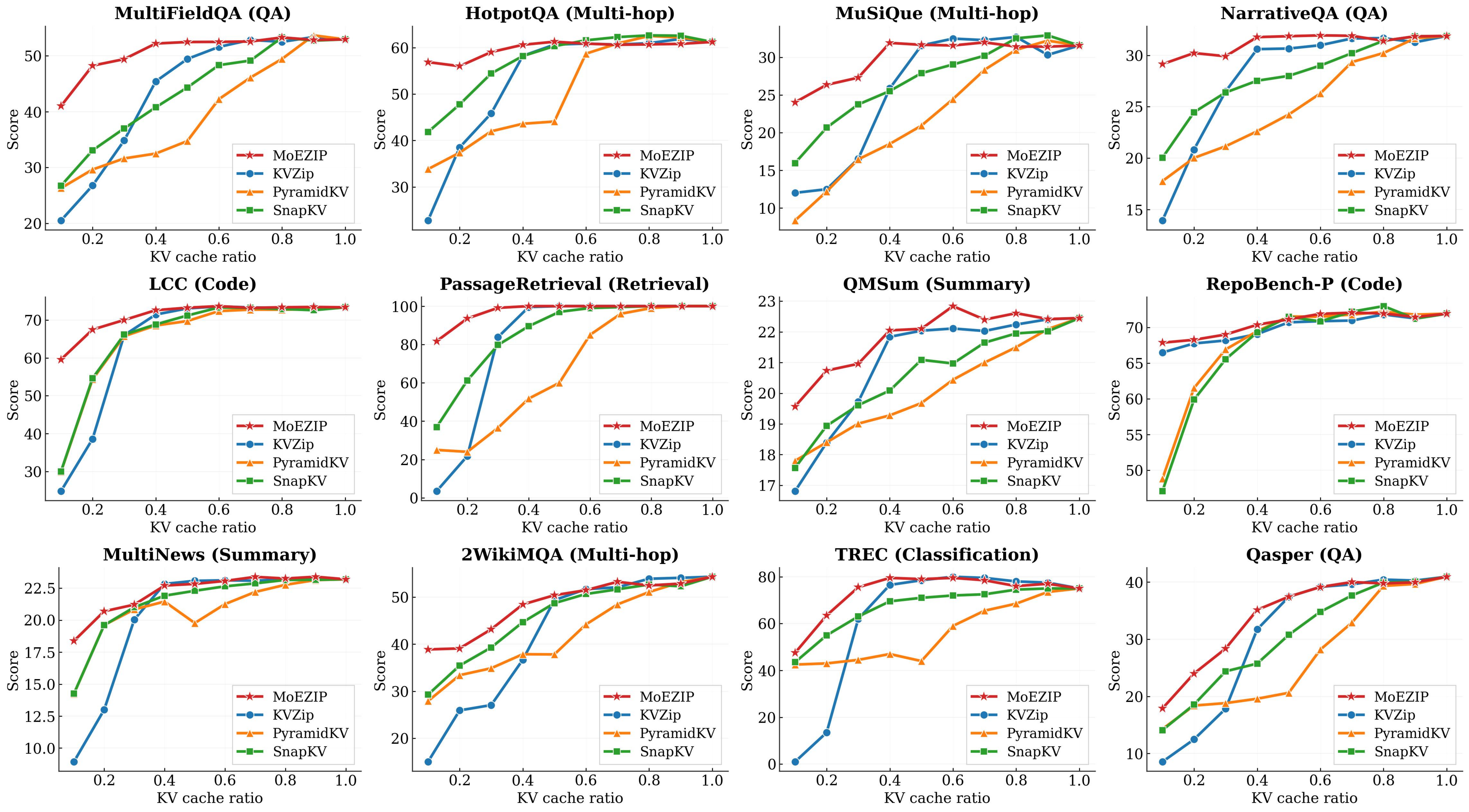}
    \caption{\textbf{Performance on 12 LongBench tasks across different KV cache ratios.} MoEZip consistently preserves higher task performance than existing KV compression baselines, especially under aggressive compression.}
    \label{fig:main_results}
\end{figure}

\section{Experiment}\label{sec:experiment}

\subsection{Experimental Settings}

\noindent\textbf{Baselines and Models.}
We compare MoEZip with KVZip~\cite{kim2025kvzip}, SnapKV~\cite{li2024snapkv}, and PyramidKV~\cite{cai2024pyramidkv}. We evaluate on Qwen3-30B-A3B-Instruct~\cite{yang2025qwen3} and GLM-4.7-Flash~\cite{zeng2025glm}, which use softmax and sigmoid routers, respectively.

\noindent\textbf{Evaluation Datasets.}
We evaluate on 12 LongBench tasks~\cite{bai2024longbench}, RULER~\cite{hsieh2024ruler}, and Needle in a Haystack (NIAH)~\cite{fu2024data} with context lengths from 4K to 32K. These benchmarks cover question answering, summarization, few-shot learning, synthetic retrieval, code completion, and diverse long-context tasks. We follow the official evaluation protocol for each benchmark and use the ROUGE-L F1 score for NIAH.

\noindent\textbf{KV Compression Setup.} We evaluate keep ratios from $0.1$ to $0.9$, matching retained-KV budgets, model checkpoints, prompt templates, and decoding settings across methods. MoEZip uses KVZip as its default base scorer; the SnapKV-based variant is evaluated separately. To control for query conditioning, we additionally compare KVZip with MoEZip-Recon using the same reconstruction query and reusable-cache setting. SnapKV and PyramidKV use their standard final-64-token observation window, which contains the complete template-defined downstream instruction in every evaluated example across the 12 LongBench tasks. This instruction span excludes task content embedded in raw question fields, such as passages and dialogues. Implementation details are provided in Appendix~\ref{appendix:implementation}.

\begin{figure}[t!]
    \centering
    \includegraphics[width=1.0\linewidth]{niah_all.pdf}
    \caption{\textbf{Performance heatmaps over context length and document depth across different KV cache ratios.} We compare MoEZip, KVZip, SnapKV, and PyramidKV under KV cache keep ratios from 0.1 to 0.9.}
    \label{fig:niah}
\end{figure}

\subsection{Main Results}
\noindent\textbf{Results on LongBench.} Fig.~\ref{fig:main_results} shows that MoEZip achieves the best performance across most KV cache budgets.  In particular, MoEZip is more robust in the aggressive compression regime, where it consistently preserves higher task performance than KVZip~\cite{kim2025kvzip}, PyramidKV~\cite{cai2024pyramidkv}, and SnapKV~\cite{li2024snapkv}. 
\begin{wrapfigure}[9.5]{r}{0.42\textwidth}
    \vspace{-0.4cm}
    \centering
    \includegraphics[width=0.40\textwidth]{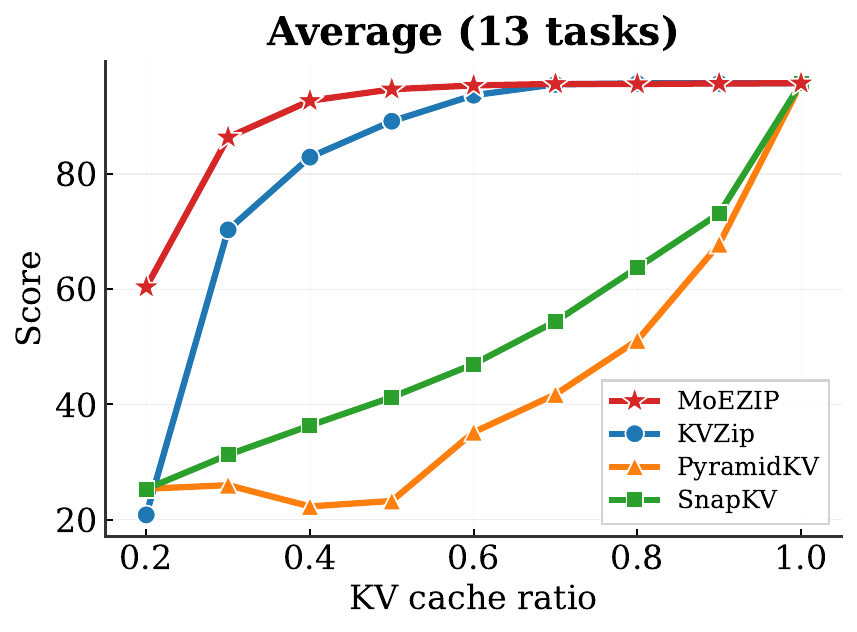}
    \caption{\textbf{Average performance on 13 RULER tasks across different KV cache ratios.}}
    \label{fig:ruler}
    \vspace{-0.6cm}
\end{wrapfigure}
The gains are most pronounced on reasoning- and retrieval-intensive benchmarks, while remaining stable on code and summarization tasks.  
Notably, under small KV cache ratios (i.e., when only a small fraction of the KV cache is retained), existing methods degrade substantially, with KVZip showing the most pronounced collapse and other baselines also suffering clear performance drops, whereas MoEZip remains comparatively robust and preserves strong performance across tasks.

\noindent\textbf{Results on NIAH.} Fig.~\ref{fig:niah} presents performance heatmaps over context length and document depth across KV cache ratios. Under light compression, most methods remain relatively strong, although MoEZip already shows the most uniform performance across the grid. The gap widens substantially in the high-compression regime: MoEZip consistently preserves strong performance across varying target positions and context sizes, while the baselines become much more sensitive to both factors. In particular, KVZip degrades most severely and eventually collapses almost entirely under aggressive compression. SnapKV and PyramidKV remain more competitive than KVZip, but still show substantial instability, especially when the relevant information is placed deeper in longer contexts. These results indicate that MoEZip is substantially more robust to both target position and context length under aggressive KV cache compression.

\begin{wrapfigure}[16]{r}{0.42\textwidth}
    \vspace{-0.5cm}
    \centering
    \includegraphics[width=0.40\textwidth]{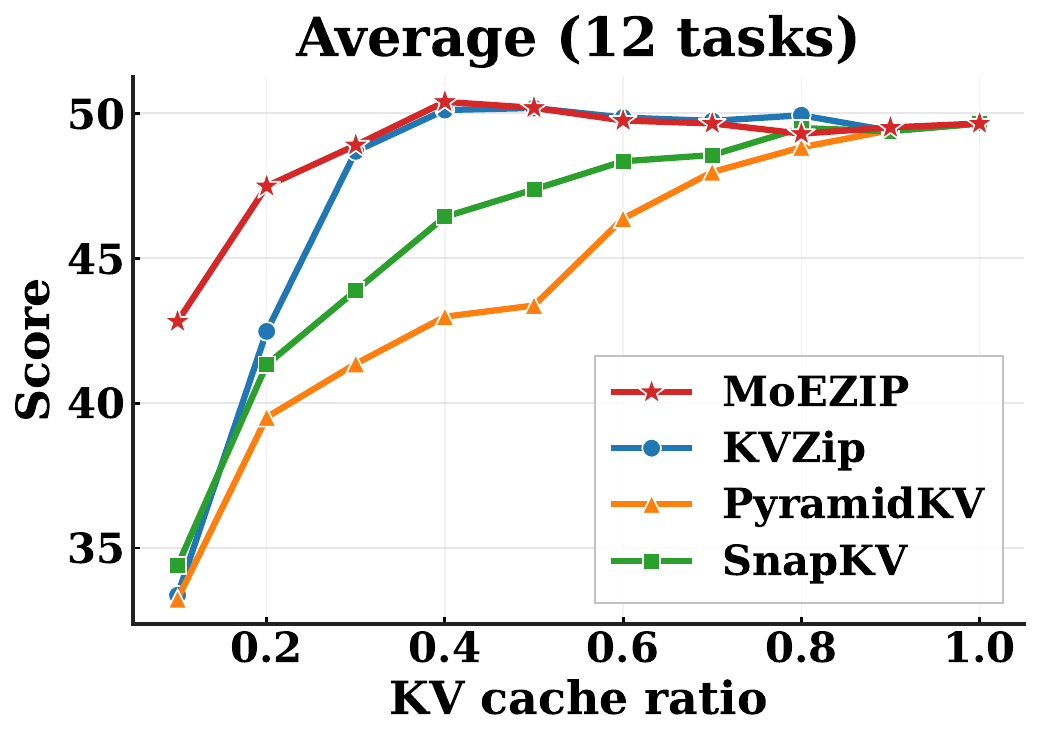}
    \vspace{-0.3cm}
    \caption{\textbf{Generalization to sigmoid-routed MoE models.} On GLM-4.7-Flash, MoEZip consistently achieves the best average performance across 12 LongBench tasks, with the largest gains under aggressive KV compression.}
    \label{fig:glm_comparison}
    \vspace{0.1cm}
\end{wrapfigure}

\noindent\textbf{Results on RULER.}  Fig.~\ref{fig:ruler} shows that MoEZip achieves the best average performance on RULER across KV cache ratios. The advantage is especially pronounced in the high-compression regime, where PyramidKV and SnapKV deteriorate substantially as the retained KV budget decreases, while KVZip, although relatively stronger, still remains consistently below MoEZip. Although all methods converge near the full-cache setting, MoEZip reaches this high-performance regime with a smaller KV budget, demonstrating stronger robustness under aggressive compression.

\noindent\textbf{Generalization across model scales and routers.} To examine generalization for sigmoid-routing models, we additionally evaluate MoEZip on GLM-4.7-Flash.  Fig.~\ref{fig:glm_comparison} shows that MoEZip remains consistently superior on average across 12 tasks for nearly all KV cache ratios. MoEZip also improves compression on Qwen3-235B-A22B-Instruct-2507~\cite{yang2025qwen3}: at keep ratio $0.1$, the 10-task LongBench macro increases from $27.6$ with KVZip to $44.7$. Detailed results are provided in Appendix~\ref{app:large-model}.

\subsection{Analyses}\label{sec:analyses}

\noindent\textbf{Effect of Routing-Fragility Scoring.} 
We evaluate the contribution of the routing fragility scoring based on Fisher-trace by comparing the full method with a variant that removes this component. As shown in Fig.~\ref{fig:additional_results_wo_fisher}, the full model consistently achieves better average performance, especially when the KV budget is small. This suggests that the Fisher-trace metric provides an effective estimate of routing fragility under compression, allowing the method to better preserve KV entries that are critical for stable expert routing. Without this component, performance degrades more noticeably under aggressive compression. Additional proxy comparisons and validation in Appendices~\ref{appendix:fisher_proxy_token}, \ref{appendix:fisher_proxy_sample}, ~\ref{appendix:surrogate_validation}, and \ref{appendix:worst} show that higher Fisher-trace bins correspond to higher expert-flip rates under compressed KV (\ref{appendix:fisher_proxy_token}), sample-level Fisher scores predict expert-flip rates (\ref{appendix:fisher_proxy_sample}), instruction-token Fisher predicts answer-token Fisher (\ref{appendix:surrogate_validation}), and Fisher-trace provides a stable default compared to worst-case curvature (\ref{appendix:worst}). 

\begin{wraptable}{r}{0.47\textwidth}
\centering
\small
\setlength{\tabcolsep}{3pt}
\vspace{-0.9cm}
\begin{tabular}{lccc}
\toprule
\textbf{Method}
& \shortstack{\textbf{Reusable}\\\textbf{cache}}
& \shortstack{\textbf{Keep}\\\textbf{0.2}}
& \shortstack{\textbf{Keep}\\\textbf{0.1}} \\
\midrule
KVZip        & Yes & 29.25 & 19.51 \\
MoEZip-Recon & Yes & 45.14 & 41.95 \\
MoEZip       & No  & \textbf{49.59} & \textbf{44.89} \\
\bottomrule
\end{tabular}
\caption{\textbf{Query conditioning and cache reuse.} LongBench 12-task macro scores on Qwen3-30B-A3B-Instruct-2507 at matched KV budgets. KVZip and MoEZip-Recon share a fixed
reconstruction query.}
\label{tab:query-agnostic}
\vspace{-0.5cm}
\end{wraptable}

\noindent\textbf{Effect of Adaptive Budget Allocation.} We study the role of adaptive budget allocation by comparing it against a static KV retention budget ($\lfloor B\cdot(\rho_\text{mid})\rfloor$), where $\rho_\text{mid}=(\rho_\text{min}+\rho_\text{max})/2$. Fig.~\ref{fig:additional_results_wo_budget} shows that removing adaptive budget allocation leads to a consistent drop in performance across KV cache ratios, with the gap again becoming more pronounced in the high-compression regime. This suggests that fragility-adaptive budgeting is more robust than fixed allocation under limited cache capacity.

\noindent\textbf{Effect of Head-wise aggregation.} We compare the proposed max pooling with a mean-pooling alternative. Fig.~\ref{fig:additional_results_mean_pooling} shows that replacing max pooling in the routing-aware score with mean pooling consistently hurts performance across nearly all KV cache ratios, with the largest gap appearing under aggressive compression. This suggests that the strongest routing-aware signal is more informative than the average signal.

\noindent\textbf{Compatibility with Different Base Scorers.} To examine whether the effectiveness of our approach depends on a particular base scoring rule, we apply it on top of SnapKV~\cite{li2024snapkv} and compare it against the original SnapKV baseline. Fig.~\ref{fig:additional_results_snapkv} shows that our variant consistently achieves higher average performance across 12 tasks over a wide range of KV cache ratios. The improvement is particularly notable under aggressive compression. These results indicate that our method is complementary to existing base scorers and can be integrated with them.

\noindent\textbf{Query-agnostic compression.} Table~\ref{tab:query-agnostic} separates the benefit of routing-aware selection from downstream query conditioning. At keep ratio $0.1$, MoEZip-Recon improves over KVZip by $22.44$ points while preserving cross-request cache reuse. It retains $88\%$ of the instruction-conditioned variant's improvement over KVZip.

\begin{figure*}[t!]
    \centering
    \begin{subfigure}[t]{0.25\textwidth}
        \centering
        \includegraphics[width=\linewidth]{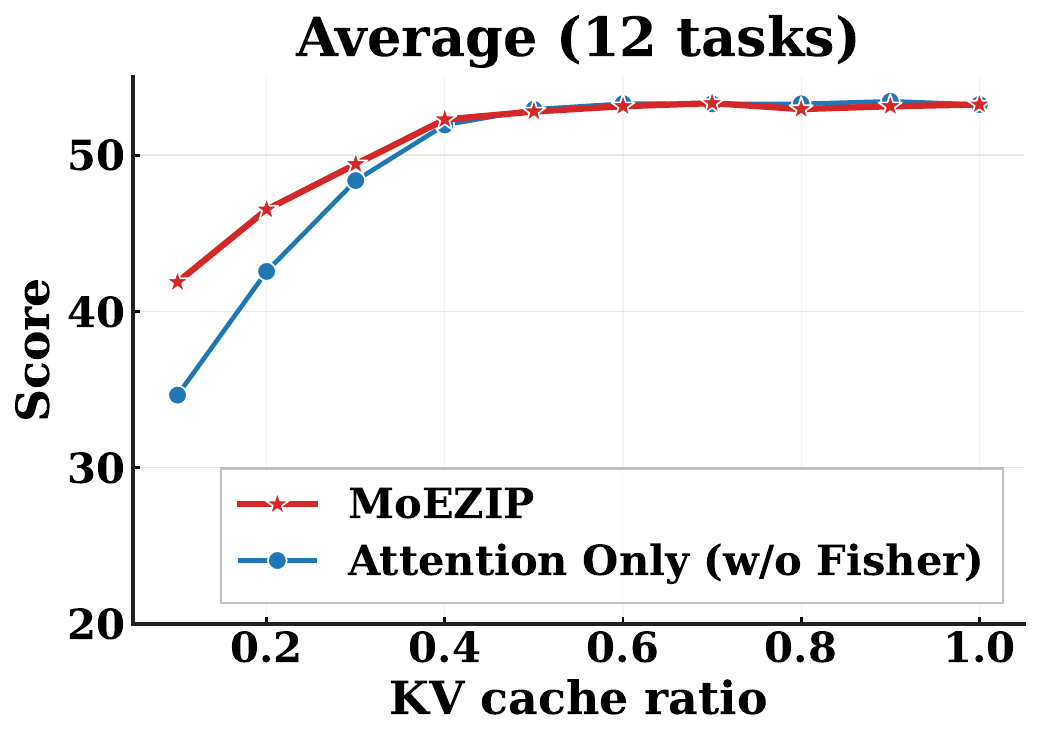}
        \caption{Without fragility score based on Fisher-trace.}
        \label{fig:additional_results_wo_fisher}
    \end{subfigure}\hfill
    \begin{subfigure}[t]{0.24\textwidth}
        \centering
        \includegraphics[width=\linewidth]{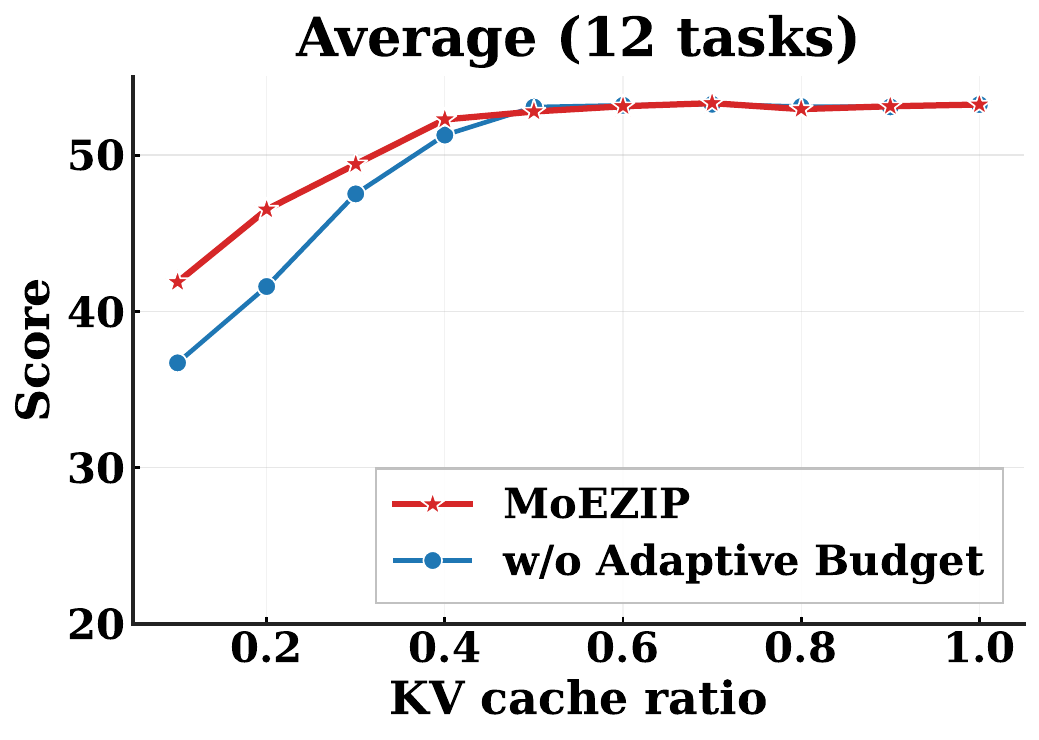}
        \caption{Using static budget allocation.}
        \label{fig:additional_results_wo_budget}
    \end{subfigure}\hfill
    \begin{subfigure}[t]{0.25\textwidth}
        \centering
        \includegraphics[width=\linewidth]{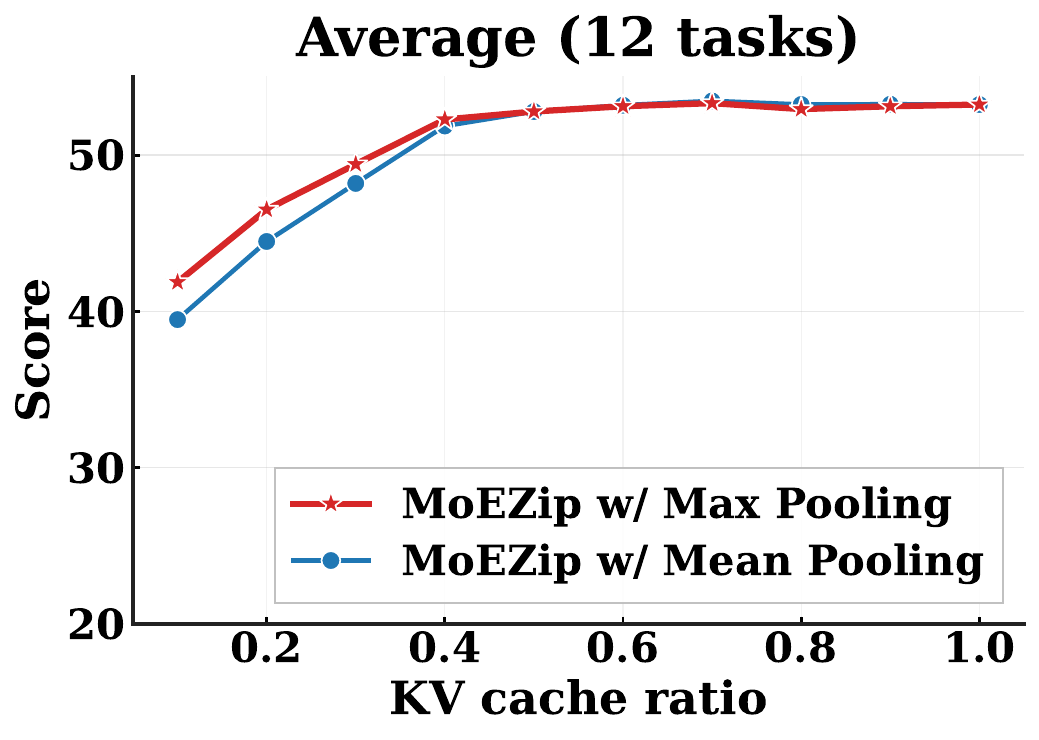}
        \caption{Head-wise aggregation variants for \eqref{eq:head_reduce}.}
        \label{fig:additional_results_mean_pooling}
    \end{subfigure}
    \begin{subfigure}[t]{0.24\textwidth}
        \centering
        \includegraphics[width=\linewidth]{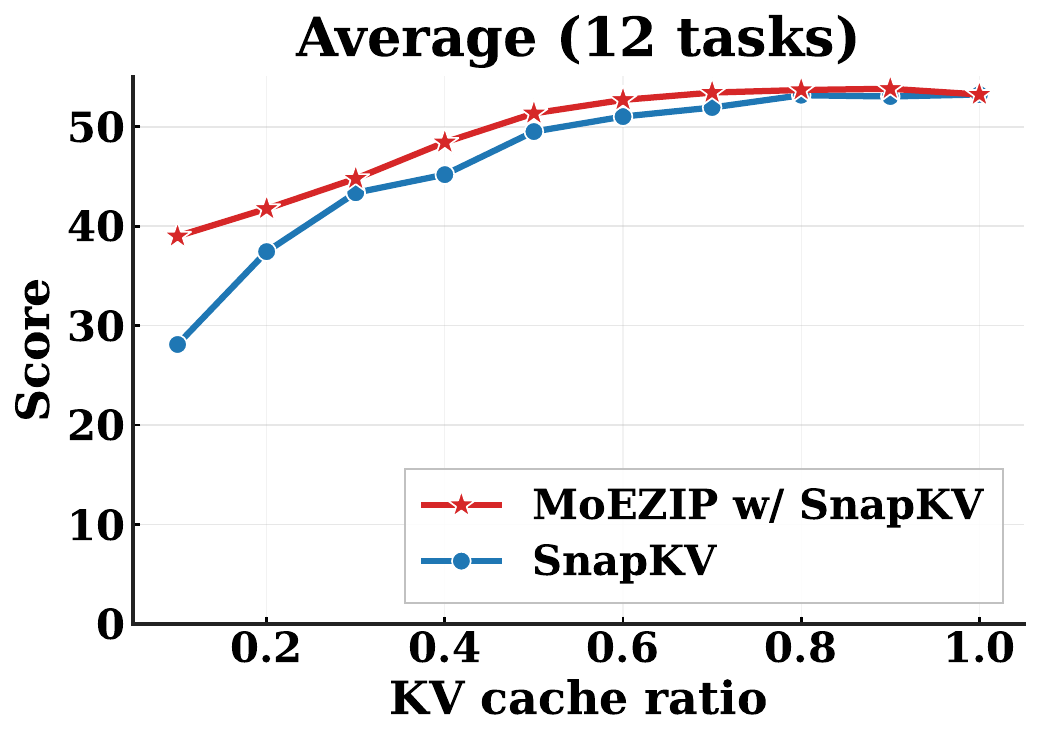}
        \caption{Using SnapKV as the base scorer.}
        \label{fig:additional_results_snapkv}
    \end{subfigure}\hfill
    \caption{\textbf{Average performance over 12 tasks across different KV cache ratios.} a)--(d) present ablations on the main components of MoEZip, including Fisher-based fragility modeling, adaptive budget allocation, head-wise aggregation, and the choice of base scorer. The full method consistently performs best, especially in the high-compression regime.}
    \label{fig:additional_results}
\end{figure*}

\subsection{Routing Intervention with a Fixed Retained Cache}
\label{sec:routing-intervention}

\noindent\textbf{Controlled intervention.} We evaluate Qwen3-30B-A3B at keep ratio $0.1$ while holding each method's retained context KV set fixed. At each instruction-token position and MoE layer, we replace compressed-run expert assignments with those from a full-KV reference. Expert computations still use activations from the intervened compressed-KV run; no hidden states or evicted KV entries are restored.

\noindent\textbf{Likelihood and task scores.}
Restoring instruction routing reduces KVZip's answer NLL by $0.70$, versus $0.27$ for MoEZip, narrowing their NLL gap from $0.88$ to $0.45$. The corresponding free-decoding LongBench gains are $10.2$ and $1.3$ points, respectively. Thus, routing changes affect answer quality even with the retained cache fixed, although correcting instruction routing does not recover all effects of KV eviction.

\begin{table}[t]
\centering
\small
\setlength{\tabcolsep}{4pt}
\begin{tabular}{llrrrr}
\toprule
\textbf{Metric} & \shortstack{\textbf{Retained}\\\textbf{set}}
& \textbf{Full KV} & \textbf{Compressed}
& \shortstack{\textbf{Oracle}\\\textbf{routing}} & $\boldsymbol{\Delta}$ \\
\midrule
NLL $\downarrow$ & KVZip  & 4.22 & 5.67 & 4.97 & $-0.70$ \\
                 & MoEZip & 4.22 & 4.79 & 4.52 & $-0.27$ \\
\midrule
LongBench $\uparrow$ & KVZip  & 53.7 & 19.5 & 29.7 & $+10.2$ \\
                     & MoEZip & 53.7 & 44.9 & 46.2 & $+1.3$ \\
\bottomrule
\end{tabular}
\label{tab:routing-intervention}
\end{table}

\noindent\textbf{Progressive routing correction.}
On the KVZip-retained cache across 12 LongBench tasks, we randomly correct $0\%$, $25\%$, $50\%$, $75\%$, or $100\%$ of instruction-routing disagreements between the full- and compressed-KV runs. Disagreement sites and all other inference conditions are fixed; intermediate conditions average three independently sampled subsets. For answer NLL $\mathcal{L}(c)$, define
\begin{equation}
\operatorname{Recovery}_{\mathrm{NLL}}(c)=
\frac{\mathcal{L}(0)-\mathcal{L}(c)}
     {\mathcal{L}(0)-\mathcal{L}(1)}.
\label{eq:progressive-recovery}
\end{equation}
NLL decreases monotonically with correction fraction: correcting $25\%$, $50\%$, and $75\%$ of sites recovers $23.8\%$, $56.7\%$, and $80.5\%$ of the improvement from complete correction. Full correction reduces NLL from $5.670$ to $4.970$, versus $4.220$ with full KV.

\begin{table}[t]
\centering
\small
\begin{tabular}{lcc}
\toprule
\textbf{Corrected sites} & \textbf{Answer NLL $\downarrow$}
& \textbf{Normalized recovery} \\
\midrule
0\%   & 5.670 & 0.0\% \\
25\%  & $5.503 \pm 0.015$ & 23.8\% \\
50\%  & $5.273 \pm 0.012$ & 56.7\% \\
75\%  & $5.107 \pm 0.008$ & 80.5\% \\
100\% & 4.970 & 100.0\% \\
\bottomrule
\end{tabular}
\caption{\textbf{Progressive routing correction.}
The retained KVZip cache is fixed; recovery is normalized between zero and complete correction.}
\label{tab:progressive-routing}
\end{table}

\noindent\textbf{Additional analyses} are presented in the Appendix~\ref{appendix:additional_experiments}.

%% file: related.tex
\section{Related Work}\label{sec:related}

\noindent\textbf{KV Cache Compressions.}
Recent studies have proposed a variety of KV cache eviction and compression methods to reduce the memory and latency cost of long-context LLM inference. StreamingLLM~\cite{xiao2023efficient} retains attention sinks and recent tokens to enable stable streaming inference, while H$_2$O~\cite{zhang2023h2o} formulates KV eviction around heavy hitters that dominate attention. SnapKV~\cite{li2024snapkv} identifies prompt positions likely to be important for future decoding from a short observation window, and PyramidKV~\cite{cai2024pyramidkv} exploits layer-wise differences in information concentration to assign non-uniform cache budgets across layers. AdaKV~\cite{feng2024ada} extends this line of work by adaptively reallocating budgets across attention heads, and DuoAttention~\cite{xiao2025duoattention} distinguishes retrieval heads from streaming heads to combine full and lightweight KV retention. KVZip~\cite{kim2025kvzip} departs from query-aware scoring and proposes query-agnostic compression via context reconstruction, improving cache reuse across multiple queries. However, these methods are primarily developed for dense LLMs. In MoE LLMs, KV eviction can change router outputs and destabilize expert assignments. MoEZip addresses this issue by explicitly preserving routing behavior during KV compression.

\noindent\textbf{Efficient LLM Architectures.}
Recent work has increasingly emphasized efficiency as a central design goal for long-context language models~\cite{li2025minimax, zeng2026glm}. Prior studies have explored this direction through alternative sequence modeling and efficient attention architectures, including linear attention methods such as DeltaNet~\cite{yang2024parallelizing}, Gated DeltaNet~\cite{yang2024gated}, and Lightning Attention~\cite{qin2024lightning}, as well as sparse attention methods such as BigBird~\cite{zaheer2020bigbird}, Native Sparse Attention~\cite{yuan-etal-2025-native}, and DSA~\cite{deepseekai2025deepseekv32}. These methods improve efficiency by modifying the underlying architecture or attention pattern, reflecting a broader shift toward scalable long-context modeling. In contrast, our work is complementary: rather than redesigning the model architecture, we focus on KV cache compression at inference time, with particular emphasis on preserving routing behavior in sparse MoE LLMs.

%% file: conclusion.tex
\section{Conclusion, Limitations, and Broader Impacts}\label{sec:conclusion}

In this paper, we revisited KV cache compression for MoE language models from the perspective of routing fragility. We showed that KV eviction can perturb expert routing and induce a MoE-specific source of quality degradation under aggressive compression. We proposed MoEZip, a routing-aware KV compression method that protects context positions important for stable expert routing. Across LongBench, NIAH, and RULER, MoEZip consistently outperforms strong baselines, especially in the high-compression regime.

A limitation of MoEZip is that it requires access to router outputs and instruction-to-context attention statistics, which may not be exposed in black-box inference settings.

As with other LLM efficiency methods, MoEZip may lower deployment costs for both beneficial applications and potential misuse. Overall, our results highlight routing fragility as an underexplored principle for KV compression in MoE models and suggest future opportunities for routing-aware compression in broader sparse architectures.

\section*{Acknowledgement}
This work was supported in part by the Institute of Information \& Communications Technology Planning \& Evaluation (IITP) grant funded by the Korea government (MSIT) (IITP-2026-RS-2020-II201819, ICT Creative Consilience Program, 10\%, and No. RS-2026-25507543, Development of AI Co-Scientist based on Scientific Causal World Model, 70\%); the National Research Foundation of Korea (NRF) grant funded by the Korea government (MSIT) (RS-2022-NR070834, RS-2026-25482946).

%% file: appendix.tex
\clearpage
\appendix
\section*{Appendix Contents}

\begin{itemize}
    \item \textbf{\ref{appendix:fisher} Local KL Expansion and Router-Specific Fisher Forms}
    \begin{itemize}
        \item \ref{appendix:router_fisher} Proof of Proposition~\ref{prop:fisher_trace}
        \item \ref{appendix:fisher_proxy_token} Token-Level Validation of the Fisher-Trace Proxy
        \item \ref{app:fisher-aggressive} Validation under Aggressive Compression
        \item \ref{appendix:fisher_proxy_sample} Sample-Level Validation of the Fisher-Trace Proxy
        \item \ref{appendix:surrogate_validation} Instruction-Token Surrogate Validation
        \item \ref{app:answer-oracle} Instruction-to-Answer Oracle Diagnostic
    \end{itemize}
    \item \textbf{\ref{appendix:worst} Worst-case Analysis}
    \item \textbf{\ref{appendix:implementation} Implementation Details}
    \item \textbf{\ref{appendix:complexity} Complexity}
    \item \textbf{\ref{appendix:additional_experiments} Additional Analyses}
    \begin{itemize}
        \item \ref{appendix:sensitivity_rho} Sensitivity Analysis of $\rho_{\min}$ and $\rho_{\max}$
        \item \ref{appendix:flip_analysis} Answer-Token Flip Rate per Compression Ratio
        \item \ref{appendix:layerwise_flip} Layer-Wise Router Instability under KV Compression
        \item \ref{appendix:query_flip} Router Flip Rate across Different Inputs for the Same Context
        \item \ref{appendix:CI} Paired Bootstrap Confidence Intervals
        \item \ref{app:query-agnostic} Query-Agnostic Compression and Cache Reuse
        \item \ref{app:anchor-granularity} Global versus Fine-Grained Anchor Selection
        \item \ref{app:anchor-overlap} Complementarity of Routing-Aware Anchors
        \item \ref{app:large-model} Evaluation on a 235B MoE Model
        \item \ref{app:kv-quantization} Composition with INT8 KV Quantization
        \item \ref{app:cross-layer-association} Cross-Layer Routing Associations and Anchor Examples
    \end{itemize}
\end{itemize}

\section{Local KL Expansion and Router-Specific Fisher Forms}\label{appendix:fisher}

\subsection{Proof of Proposition~\ref{prop:fisher_trace}}\label{appendix:router_fisher}
\begin{proof}
By the second-order expansion of KL divergence around a smooth parametric family \cite{amari2000methods},
\begin{equation}
D\!\left(p(\cdot|z)\,\|\,p(\cdot|z+\delta z)\right)
=
\frac{1}{2}(\delta z)^\top F(z)\,(\delta z)
+
o(\|\delta z\|^2),
\label{eq:app_kl_second_order}
\end{equation}
where
\begin{equation}
\left[F(z)\right]_{i,j}=
-\mathbb E_{p(\cdot\,|\,z)}
\!\left[
\frac{\partial^2}{\partial z_i\partial z_j}\log p(\cdot\,|\,z)
\right]
\label{eq:app_fisher_def}
\end{equation}
is the Fisher information matrix with respect to logit $z$. Alternatively,
\begin{equation}
F(z)=
-\mathbb E_{p(\cdot\,|\,z)}
\!\left[
\nabla^2_z \log p(\cdot\,|\,z)
\right]
\label{eq:app_fisher_def2}
\end{equation}
where $\nabla^2_z$ denotes the Hessian with respect to $z$.

We first show \eqref{eq:expected_kl_trace}. We have that
\begin{align}  
\mathbb{E}[D(p(\cdot|z)\,\|\, p(\cdot|z+\delta z))&]
\propto \mathbb{E}\left[(\delta z)^\top F(z)\,(\delta z)\right]\label{eq:s1}
\\&=\mathbb{E}\left[\text{trace}\left[(\delta z)^\top F(z)\,(\delta z)\right]\right]\label{eq:s2}
\\&=\mathbb{E}\left[\text{trace}\left[F(z)\,\delta z~\delta z^\top\right]\right]\label{eq:s3}
\\&=\text{trace}\left[F(z)\,\mathbb{E}\left[\delta z~\delta z^\top\right]\right]\label{eq:s4}
\\&=\text{trace}\left[F(z)\,(\Sigma_{\delta z} + \mu_{\delta z}\mu_{\delta z}^\top)\right]\label{eq:s5}
\\&=\text{trace}\left[F(z)\,\Sigma_{\delta z}\right] 
\\&=\sigma^2~\text{trace}\left[F(z)\right]\label{eq:s6}
\end{align}
where we used: for \eqref{eq:s1}, a scalar is equal to the trace of the scalar; for \eqref{eq:s2}, $\text{trace}(ABC) = \text{trace}(BCA)$, for \eqref{eq:s3}, linearity of expectations; for \eqref{eq:s5}--\eqref{eq:s6}, we have  \(
\mu_{\delta z} = \mathbf{0}\) and \(\Sigma_{\delta z}= \sigma^2 I\).

Next, we show \eqref{eq:fragility_softmax_closed} and \eqref{eq:fragility_sigmoid_closed}.
\paragraph{Softmax-type routing.}
From \eqref{eq:prelim_softmax_gate}, we have that
\[
\log p(x|z) =x^\top z - \log(\sum_e \exp(z_e))
\]
Thus,
\[
-\nabla_z \log p(x|z) =\text{softmax}(z) - x = g-x
\]
Thus, the negative Hessian of $p(x|z)$ with respect to $z$ is the Jacobian of $g=\text{softmax}(z)$ with respect to $z$, which is known to be $\text{diag}(g) - gg^\top$. Thus,
\begin{align}
F(z) = -\mathbb{E}[\nabla_z^2 \log p(x|z)] = \text{diag}(g) - gg^\top\label{eq:fim-softmax}    
\end{align}

Thus,
\[
\text{trace}[F(z)] = \text{trace}[\text{diag}(g)] - \text{trace}(gg^T) = \sum_e g_e - \sum_e g_e^2 = 1 - \sum_e g_e^2.
\]
which proves \eqref{eq:fragility_softmax_closed}.

\paragraph{Sigmoid-type routing.}
From \eqref{eq:prelim_sigmoid_gate}, we have that
\[
\log p(x|z) = \sum_e x_e \log g_e(z) + (1-x_e)\log (1-g_e(z))
\]
where $g_e(z) = \sigma(z_e)$. Then
\[
\frac{\partial}{\partial z_e} \log p(x|z) = x_e \frac{ g_e(z)(1-g_e(z))}{g_e(z)}  - (1-x_e) \frac{g_e(z)(1-g_e(z))}{1-g_e(z)} = x_e - g_e(z)
\]
Thus
\[
-\frac{\partial^2}{\partial z_e^2} \log p(x|z) = g_e(z)(1-g_e(z))
\]
and
\[
-\frac{\partial^2}{\partial z_e\partial z_{e'}} \log p(x|z) = 0
\]
yielding \[-\nabla_z^2 \log p(x|z) = \text{diag}( g\odot(1-g) )\] where $\odot$ denotes the elementwise product. Thus
\[
F(z) = \text{diag}( g\odot(1-g) )
\]
Thus, we have that
\[
\text{trace}(F(z)) = \sum_e g_e (1- g_e)
\]
which proves \eqref{eq:fragility_sigmoid_closed}.
\end{proof}

\paragraph{Interpretation of the Fisher trace.}
The quadratic KL expansion in Eq.~\eqref{eq:app_kl_second_order}
is direction-dependent, whereas its isotropic expectation is proportional
to the Fisher trace. The trace can also be written as the expected squared
norm of the score function:
\begin{equation}
\operatorname{trace}(F(z))
=\mathbb{E}_{p(\cdot|z)}
\left[\left\|\nabla_z\log p(x|z)\right\|_2^2\right].
\label{eq:app_score_norm}
\end{equation}
This motivates a direction-agnostic measure of local routing sensitivity
without requiring the eviction-induced perturbation direction.
The isotropic model is a local motivation, rather than an assumption
that actual KV-eviction shifts follow this distribution.
The router-specific trace expressions require $O(E)$ computation. Appendix~\ref{appendix:worst} considers worst-case curvature, while
Appendix~\ref{app:fisher-aggressive} evaluates the relationship between
the trace and realized flips under aggressive compression.

\subsection{Token-Level Validation of the Fisher-Trace Proxy}
\label{appendix:fisher_proxy_token}

To further validate the Fisher-trace score, we group $(\text{sample}, \text{decode step}, \text{layer})$ observations into quantiles according to their Fisher-trace values and measure the corresponding empirical expert-flip rate under KV compression. As shown in Fig.~\ref{fig:fisher_flip_bins}, higher Fisher-trace bins consistently exhibit higher flip rates across both HotpotQA and Qasper, and this separation becomes stronger under more aggressive compression. This provides direct empirical support that the Fisher trace captures meaningful routing fragility, beyond its role as a theoretically motivated proxy.

\begin{figure}[h!]
    \centering
    \includegraphics[width=0.9\linewidth]{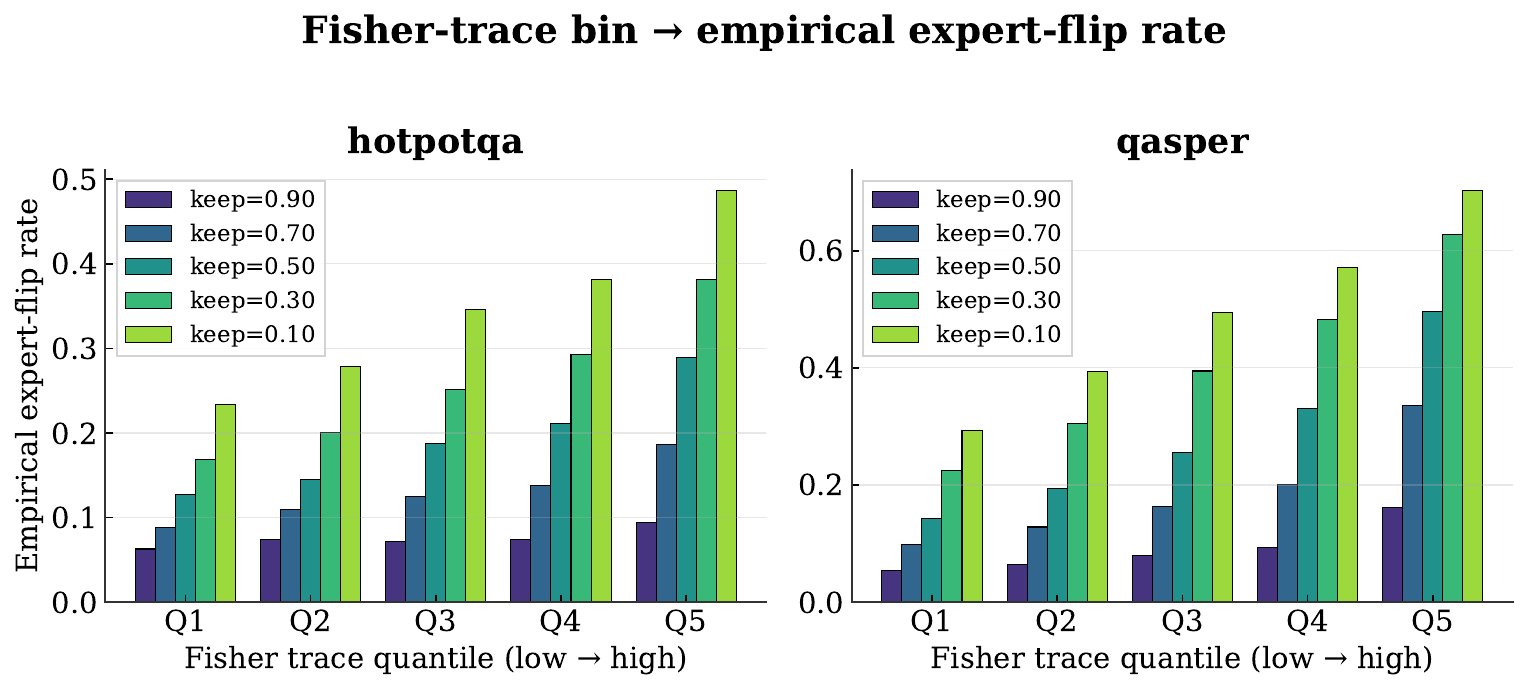}
    \caption{\textbf{Empirical expert-flip rate across Fisher-trace quantiles.} We bin $(\text{sample}, \text{decode step}, \text{layer})$ observations by Fisher trace from the full-KV router softmax and measure the mean top-1 expert-flip rate under compressed KV. Across both HotpotQA and Qasper, higher Fisher-trace bins consistently show higher flip rates, and the separation becomes larger as compression becomes more aggressive.}
    \label{fig:fisher_flip_bins}
\end{figure}

\subsubsection{Validation under Aggressive Compression}
\label{app:fisher-aggressive}
\label{appendix:fisher_aggressive}

We extend the token-level validation to 12 LongBench tasks using
Qwen3-30B with KVZip at keep ratio $0.1$. Fisher-trace quintiles are formed separately within each task and layer; we then aggregate realized expert-flip rates across 50 samples and evaluate decoding positions within each quintile.

\begin{table}[t]
\centering
\small
\setlength{\tabcolsep}{4pt}
\begin{tabular}{lrrrrrr}
\toprule
\textbf{Task} & \textbf{Q1} & \textbf{Q2} & \textbf{Q3}
& \textbf{Q4} & \textbf{Q5} & \textbf{Q5/Q1} \\
\midrule
MuSiQue     & .103 & .144 & .223 & .275 & .429 & 4.17 \\
2WikiMQA       & .085 & .127 & .168 & .210 & .340 & 4.00 \\
PsgRetrieval & .142 & .207 & .265 & .369 & .503 & 3.54 \\
RepoBench-P    & .185 & .302 & .375 & .475 & .609 & 3.29 \\
MultiFieldQA & .217 & .337 & .455 & .556 & .704 & 3.24 \\
Qasper       & .230 & .355 & .472 & .575 & .723 & 3.14 \\
HotpotQA     & .174 & .243 & .338 & .408 & .547 & 3.14 \\
NarrativeQA  & .199 & .296 & .400 & .481 & .608 & 3.06 \\
QMSUM        & .210 & .334 & .431 & .510 & .637 & 3.03 \\
TREC         & .202 & .260 & .340 & .454 & .604 & 2.99 \\
Multi-News   & .208 & .340 & .410 & .493 & .608 & 2.92 \\
LCC          & .331 & .447 & .527 & .629 & .757 & 2.29 \\
\bottomrule
\end{tabular}
\caption{\textbf{Realized expert-flip rates by Fisher-trace quintile.}
Qwen3-30B with KVZip at keep ratio $0.1$. Q1 and Q5 denote the lowest and highest quintiles. Ratios are computed from the displayed rounded flip rates.}
\label{tab:fisher-quintiles}
\end{table}

Flip rates increase monotonically across quintiles in all 12 tasks.
The highest quintile exhibits approximately $2.29$--$4.17$ times
the flip rate of the lowest quintile.
This supports the Fisher trace as a relative ranking signal under
aggressive compression across retrieval, question answering,
summarization, classification, and code tasks. It does not require interpreting the local perturbation approximation as an exact model of eviction-induced logit shifts.

\subsection{Sample-Level Validation of the Fisher-Trace Proxy} \label{appendix:fisher_proxy_sample}

We first perform a sample-level sanity check to verify whether the Fisher-trace fragility proxy is predictive of actual router instability. For each sample, we compute the mean Fisher trace over answer-token router states under the full KV cache, and correlate it with the sample-level expert flip rate induced by KV compression. We report both Spearman's rank correlation $\rho$ and Pearson correlation $r$ shown in the Table~\ref{tab:decode_fisher_flip_corr}.

\begin{table}[t!]
\centering
\small
\caption{\textbf{Sample-level correlation between answer-token Fisher trace and expert flip rate.} For each sample, we compute the mean Fisher trace from full-KV answer-token router states and correlate them with the expert flip rate under compressed KV. Both Spearman's $\rho$ and Pearson's $r$ are positive across all settings and increase under more aggressive compression, indicating that the Fisher-trace proxy captures sample-level routing fragility.}
\begin{tabular}{lccc}
\toprule
Dataset & Keep ratio & Spearman $\rho$ & Pearson $r$ \\
\midrule
HotpotQA & 0.7 & 0.26 & 0.28 \\
HotpotQA & 0.5 & 0.31 & 0.39 \\
HotpotQA & 0.3 & 0.50 & 0.54 \\
HotpotQA & 0.1 & 0.70 & 0.68 \\
Qasper   & 0.7 & 0.41 & 0.40 \\
Qasper   & 0.5 & 0.54 & 0.52 \\
Qasper   & 0.3 & 0.73 & 0.69 \\
Qasper   & 0.1 & 0.82 & 0.83 \\
\bottomrule
\end{tabular}
\label{tab:decode_fisher_flip_corr}
\end{table}

The correlations are positive in all settings and become substantially stronger as the keep ratio decreases. This trend is consistent across both HotpotQA and Qasper: under aggressive compression, samples with larger Fisher trace are much more likely to exhibit expert-routing flips. These results provide sample-level evidence that Fisher trace captures meaningful routing fragility rather than only acting as a local uncertainty heuristic.

This analysis validates the Fisher-trace proxy itself. Since answer-token router states are unavailable before generation, MoEZip uses instruction-token router states as a practical pre-generation surrogate; the effectiveness of this choice is further supported by the end-task improvements and routing-flip reductions reported in Sec.~\ref{sec:analyses} and Appendix~\ref{appendix:flip_analysis}.

\subsection{Instruction-token surrogate validation.}\label{appendix:surrogate_validation}

\begin{table}[t!]
\centering
\small
\caption{\textbf{Instruction-token surrogate validation.} $X_1$: pre-generation instruction-token mean Fisher score (observable at eviction time).
$X_2$: answer-token mean Fisher score under full KV (the quantity $X_1$ is meant to surrogate).
$Y$: compressed-KV expert flip rate (a downstream, highly aggregated statistic).
\textbf{Load-bearing claim} ($X_1 \leftrightarrow X_2$): the surrogate relation is significant across all tasks ($p<0.05$) and is independent of the keep ratio (both are computed under full KV).
\textbf{Supporting evidence} ($X_1, X_2 \leftrightarrow Y$): Fisher scores additionally correlate with the downstream flip rate, with strength varying by task structure.
Asterisks denote correlations significant at $p<0.05$.}

\begin{tabular}{lccccc}
\toprule
\multirow{2}{*}{Dataset} & \multirow{2}{*}{$\rho(X_1, X_2)$} & \multicolumn{2}{c}{$\rho(X_1, Y)$} & \multicolumn{2}{c}{$\rho(X_2, Y)$} \\
\cmidrule(lr){3-4} \cmidrule(lr){5-6}
 & & Keep 0.1 & Keep 0.3 & Keep 0.1 & Keep 0.3 \\
\midrule
QMSUM        & $+0.351^{*}$ & $+0.103$       & $+0.367^{*}$ & $+0.105$       & $+0.215$       \\
MultiFieldQA & $+0.475^{*}$ & $+0.351^{*}$   & $+0.242$     & $+0.825^{*}$   & $+0.563^{*}$   \\
\bottomrule
\end{tabular}
\label{tab:surrogate_validation}
\end{table}

The central design choice of MoEZip is to use instruction-token router states as a pre-generation surrogate for future answer-token routing fragility, since answer-token states are unavailable at eviction time. Table~\ref{tab:surrogate_validation} validates this choice. We separate the load-bearing surrogate claim from supporting end-to-end evidence, since the two play different roles in justifying MoEZip.

\paragraph{Load-bearing claim: surrogate validity ($X_1 \leftrightarrow X_2$).}
The relation that MoEZip actually depends on is whether instruction-token Fisher tracks answer-token Fisher under full KV: if it does, then the pre-generation signal $X_1$ is a faithful proxy for the routing fragility that ultimately governs generation quality. We find that $X_1$ is significantly correlated with $X_2$ on every evaluated task ($\rho = 0.351^{*}$ on QMSUM, $\rho = 0.475^{*}$ on MultiFieldQA, all $p<0.05$). This relation holds across both extractive QA and summarization, indicating that the surrogate is not tied to a specific task structure. Note that $\rho(X_1, X_2)$ is independent of the keep ratio because both quantities are computed from full-KV router states; only $\rho(\cdot, Y)$ varies with compression.

\paragraph{Supporting evidence: end-to-end link to the flip rate ($X_1, X_2 \leftrightarrow Y$).}
We additionally report correlations between Fisher scores and the compressed-KV flip rate $Y$. These are not necessary for the surrogate to be valid: $Y$ is a coarse, highly aggregated statistic, averaged over MoE layers, decode steps, and the particular eviction-induced perturbation realized at a given keep ratio, so a position-level Fisher score need not strongly predict it for the surrogate $X_1 \to X_2$ to be useful. Nevertheless, all reported correlations are positive, and on routing-localized tasks such as MultiFieldQA they are strong (e.g., $\rho(X_2, Y) = 0.825^{*}$ at keep $=0.1$). The weaker $X_1$--$Y$ correlation on QMSUM at keep $=0.1$ ($\rho = 0.103$) is consistent with summarization involving broad, distributed attention, so routing-critical positions are spatially diffuse; aggregating over many such positions averages out the per-position signal in $Y$. Importantly, the surrogate relation $X_1 \leftrightarrow X_2$ remains significant on QMSUM even at keep $=0.1$, so this reflects task-dependent diffuseness in $Y$ rather than a breakdown of the instruction-to-answer surrogate.

\paragraph{Takeaway.}
The load-bearing surrogate claim ($X_1 \to X_2$) holds uniformly across tasks and is what justifies using instruction-token router states for routing-aware KV selection. The end-to-end link to $Y$ is supporting evidence whose strength is task-dependent and strongest where routing perturbations are spatially localized, precisely the regime where position-level anchoring is most expected to help. The end-task improvements and routing-flip reductions reported in Section~\ref{sec:analyses} and Appendix~\ref{appendix:flip_analysis} provide complementary downstream validation across all evaluated tasks.

\subsubsection{Instruction-to-Answer Oracle Diagnostic}
\label{app:answer-oracle}
\label{appendix:answer_oracle}

\paragraph{Diagnostic design.}
We complement the score-level correlations above with a direct
task-performance comparison.
The answer-oracle condition replaces instruction-token positions with
teacher-forced reference-answer positions in both the Fisher and
token-to-context attention terms.
The retained-KV budget, routing-aware anchor budget $B_f$,
base scorer, and evaluation protocol are fixed across conditions.
Reference answers are used only to select oracle anchors and are
discarded before free decoding.
This is a privileged diagnostic rather than a deployable inference variant.

\paragraph{Recovery and anchor agreement.}
For task $d$, let $s_d^{\mathrm{base}}$,
$s_d^{\mathrm{inst}}$, and $s_d^{\mathrm{oracle}}$
denote the scores of KVZip, instruction-conditioned selection,
and answer-oracle selection.
We report
\begin{equation}
\begin{aligned}
R_d &=
\frac{s_d^{\mathrm{inst}}-s_d^{\mathrm{base}}}
     {s_d^{\mathrm{oracle}}-s_d^{\mathrm{base}}},\\
R_{\mathrm{agg}} &=
\frac{\sum_d(s_d^{\mathrm{inst}}-s_d^{\mathrm{base}})}
     {\sum_d(s_d^{\mathrm{oracle}}-s_d^{\mathrm{base}})}.
\end{aligned}
\label{eq:answer-oracle-recovery}
\end{equation}
Aggregate recovery is a ratio of summed improvements, not the arithmetic
mean of task-level recovery ratios.
For instruction and oracle anchor sets
$\mathcal{A}_d^{\mathrm{inst}}$ and $\mathcal{A}_d^{\mathrm{oracle}}$,
we measure Jaccard similarity as
\begin{equation}
J_d =
\frac{|\mathcal{A}_d^{\mathrm{inst}}\cap\mathcal{A}_d^{\mathrm{oracle}}|}
     {|\mathcal{A}_d^{\mathrm{inst}}\cup\mathcal{A}_d^{\mathrm{oracle}}|}.
\label{eq:answer-oracle-jaccard}
\end{equation}

\paragraph{LongBench results.}
Across four LongBench tasks at keep ratio $0.1$, selection based on
instruction tokens recovers approximately $82\%$ of
the oracle's aggregate improvement over KVZip.
Recovery is high on passage retrieval and HotpotQA ($97\%$ and $87\%$),
but lower on Qasper and QMSUM ($43\%$ and $20\%$), where anchor
agreement is also lower.
These results quantify task-dependent gaps that are not captured by
score-level correlation alone.

\begin{table}[t]
\centering
\small
\setlength{\tabcolsep}{4pt}
\begin{tabular}{lrrrrr}
\toprule
\textbf{Task} & \textbf{KVZip} & \textbf{Instruction}
& \shortstack{\textbf{Answer}\\\textbf{oracle}}
& \textbf{Recovery} & \textbf{Jaccard} \\
\midrule
PsgRetrieval & 3.5  & 63.3 & 65.1 & .97 & .96 \\
HotpotQA     & 22.8 & 48.6 & 52.3 & .87 & .88 \\
Qasper       & 8.6  & 18.3 & 31.2 & .43 & .71 \\
QMSUM        & 16.8 & 17.5 & 20.3 & .20 & .64 \\
\midrule
Macro / aggregate & 12.9 & 36.9 & 42.2 & .82 & .80 \\
\bottomrule
\end{tabular}
\caption{\textbf{Instruction-conditioned versus answer-oracle selection.}
Qwen3-30B at keep ratio $0.1$.
The final row reports macro task scores, aggregate recovery
from Eq.~\eqref{eq:answer-oracle-recovery}, and mean Jaccard similarity.}
\label{tab:oracle-longbench}
\end{table}

\paragraph{NIAH results.}
Across five RULER/NIAH settings with 4K-token contexts,
instruction-conditioned selection recovers approximately $89\%$ of
the oracle's aggregate improvement.
Recovery is $97\%$ across the three settings at keep ratio $0.2$
and $75\%$ across the two settings at keep ratio $0.1$.
The surrogate therefore retains most of the oracle benefit in these
retrieval settings, with a larger gap under more aggressive compression.

\begin{table}[t]
\centering
\small
\setlength{\tabcolsep}{3pt}
\begin{tabular}{lrrrrrr}
\toprule
\textbf{Task} & \textbf{Keep} & \textbf{KVZip}
& \textbf{Instruction} & \textbf{Oracle}
& \textbf{Recovery} & \textbf{Jaccard} \\
\midrule
NIAH-single-2   & .2 & 15.4 & 100.0 & 100.0 & 1.00 & .94 \\
NIAH-multikey-1 & .2 & 10.6 & 86.0  & 94.0  & .90  & .76 \\
NIAH-multiquery & .2 & 10.0 & 77.0  & 77.0  & 1.00 & .81 \\
NIAH-single-2   & .1 & 10.2 & 72.0  & 88.0  & .79  & .52 \\
NIAH-multikey-1 & .1 & 10.4 & 44.0  & 60.0  & .68  & .44 \\
\bottomrule
\end{tabular}
\caption{\textbf{Answer-oracle diagnostic on RULER/NIAH.}
All settings use 4K-token contexts.
Instruction and oracle conditions use matched retained-KV and anchor budgets.}
\label{tab:oracle-niah}
\end{table}

\paragraph{Signal-level check.}
On RULER QA-1 (SQuAD), instruction- and answer-conditioned context-position
scores have Spearman correlation $0.99$.
Their anchor sets have Jaccard similarity $0.94$ and answer-anchor
recall $0.97$. Recall is the fraction of oracle anchors also selected
by the instruction-based condition.
This provides a further signal-level check alongside task performance.

\section{Worst-case Analysis}\label{appendix:worst}
The worst-case perturbation corresponds to the case where perturbation $\delta z$ is in the direction of the eigenvector associated with the maximum eigenvalue of Fisher information matrix $F(z)$, or $\lambda_{\max}(F(z))$. We consider estimating $\lambda_{\max}(F(z))$ for two types of routing.

\paragraph{Softmax-type routing.} In \eqref{eq:fim-softmax}, we showed that $F(z)=\text{diag}(g)-gg^\top$. A closed-form $\lambda_{\max}$ is unavailable in this case, and we find an upper bound on $\lambda_{\max}$ with computational complexity $O(E)$ as follows.
\begin{proposition}
Let $g=\text{softmax}(z)$ and $F=\text{diag}(g)-gg^\top$. We have that
\begin{align}
\lambda_{\max}(F)  &\leq \min(B_1,B_2,B_3)\label{eq:softmax_worst_bound}\\
B_1 &= g_{\max}\\
B_2 &= \max_i[2g_{i}(1-g_{i})]\\
B_3 &= \sqrt{\sum_i g_i^2 - 2 \sum_i g_i^3 +\left(\sum_i g_i^2\right)^2}
\end{align}
where $g_{\max}$ denotes the maximum element of $g$.
\begin{proof}
For bound $B_1$, we use the triangular inequality for the maximum eigenvalue function as follows. For symmetric $A$ and $B$, we have
\[
\lambda_{\max}(A+B) \leq \lambda_{\max}(A) + \lambda_{\max}(B)
\]
Thus
\[
\lambda_{\max}(F) \leq \lambda_{\max}(\text{diag}(g)) + \lambda_{\max}(-gg^\top)
\]
Since $-gg^\top$ is rank-one matrix with one negative eigenvalue and $E-1$ zero eigenvalues, we have $\lambda_{\max}(-gg^\top)=0$. Since $\lambda_{\max}(\text{diag}(g)) = g_{\max}$, we have
\[
\lambda_{\max}(F) \leq g_{\max} = B_1
\]

For the bound $B_2$, we use Gershgorin circle theorem \cite{horn2012matrix}. The theorem states that, for any eigenvalue $\lambda$ of $F$, there  exists row index $i\in\{1,...,E\}$ such that
\begin{align}
F_{ii} - R_i\leq \lambda \leq F_{ii} + R_i\label{eq:gersh}
\end{align}
where $R_i$ is the sum of off-diagonal elements at row $i$, i.e.,
\[
R_i := \sum_{j\neq i} |F_{ij}|
\]
If we apply the theorem to $F$, we have that
\[
F_{ii} = g_{i}(1-g_{i})
\]
and
\[
R_i = |\sum_{j\neq i} F_{ij}| = |-\sum_{j\neq i} g_{i}g_j| = g_i \sum_{j\ne i} g_j = g_i(1-g_i)
\]
where we have used $g_i = 1- \sum_{j\ne i}g_j$. 
Thus, \eqref{eq:gersh} becomes
\[
0 \leq \lambda \leq 2g_i(1-g_i)
\]
This implies that
\[
\lambda_{\max} \leq \max_{i}[2g_i(1-g_i)] = B_2
\]
For bound $B_3$, we have in general
\[
\lambda_{\max}(F) \leq \| F\|_F
\]
where $\|\cdot\|_F$ denotes the Frobenius norm. Using its definition,
\[
\|F\|_F = \sqrt{\text{trace}(F^\top F)}
\]
We have that
\begin{align*}
 \text{trace}(F^\top F) &=\text{trace}\left[(\text{diag}(g) - gg^\top)(\text{diag}(g) - gg^\top)\right]
 \\&=\text{trace}[\text{diag}(g)^2] - 2\cdot\text{trace}[gg^\top \text{diag}(g)] + \text{trace}[(gg^\top)^2]    
\end{align*}
We have
\begin{align*}
\text{trace}[\text{diag}(g)^2] &= \sum_e g_e^2   
\\\text{trace}[gg^\top \text{diag}(g)] &= \text{trace}[g^\top \text{diag}(g) g] = g^\top \text{diag}(g) g = \sum_e g_e^3
\\\text{trace}[(gg^\top)^2] &= \text{trace}[g^\top(gg^\top)g] = g^\top(gg^\top)g = (g^\top g)^2 = \left(\sum_e g_e^2\right)^2
\end{align*}
Thus, we have $B_3 = \|F\|_F$, which completes the proof.
\end{proof}
\end{proposition}
The expression of the bounds $B_1,B_2$ and $B_3$ shows that all of them can be computed in $O(E)$ time.

\paragraph{Sigmoid-style routing.}
For sigmoid-style routing, Appendix~\ref{appendix:router_fisher} shows that
\[
F(z)
=
\operatorname{diag}(g\odot(1-g)).
\]
Because the Fisher matrix is diagonal, its maximum eigenvalue is available in closed form:
\begin{equation}
\lambda_{\max}\!\left(F\right)
=
\max_e g_e(1-g_e).
\label{eq:sigmoid_worst_exact}
\end{equation}
Thus, unlike the softmax case, the sigmoid worst-case curvature can be computed exactly in $O(E)$ time.

\begin{figure}[t!]
    \centering
    \includegraphics[width=1.0\linewidth]{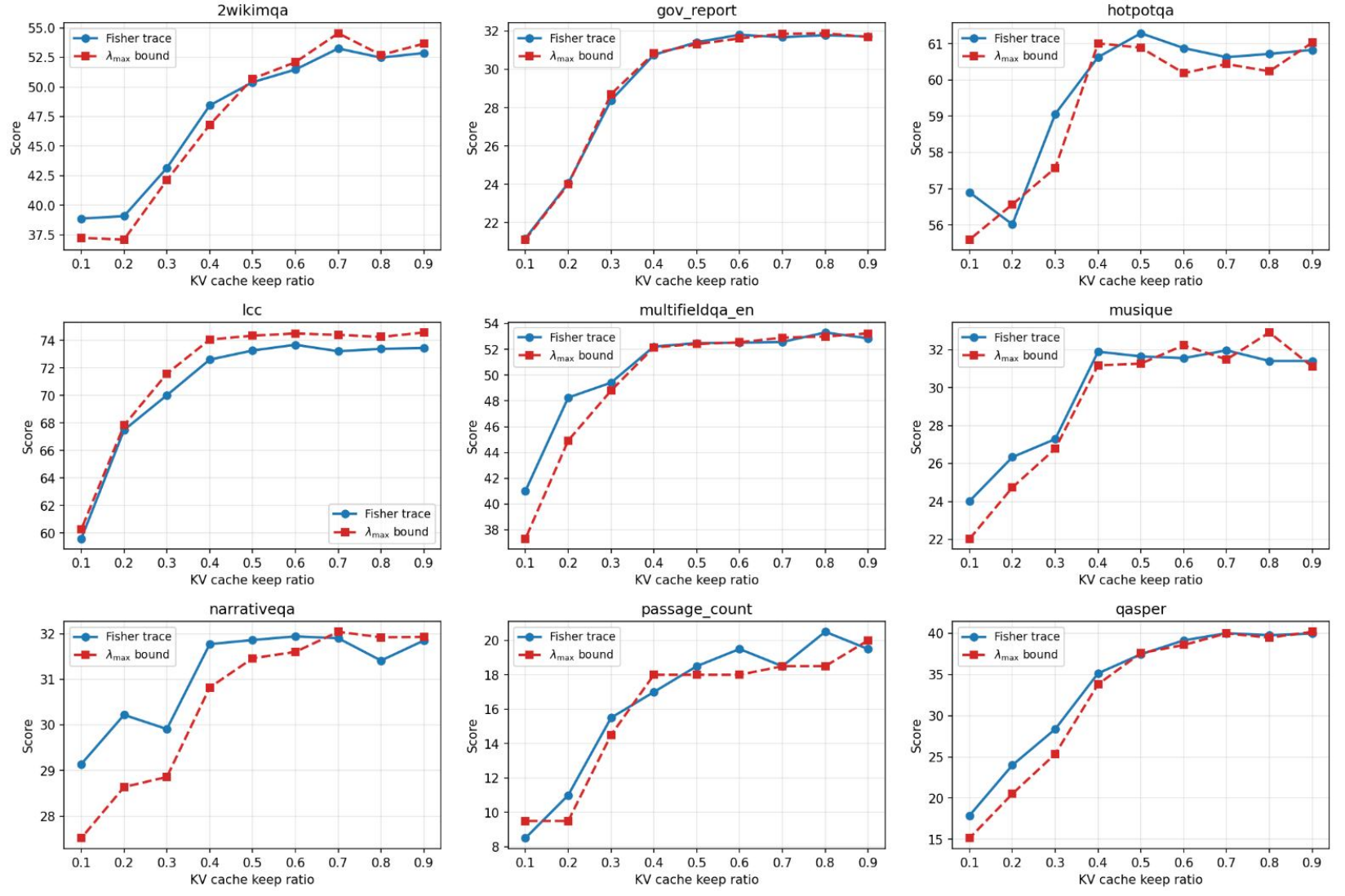}
    \caption{\textbf{Softmax-router worst-case bound versus Fisher trace.} The worst-case Fisher-bound score performs comparably to Fisher trace on several tasks, but does not consistently improve performance and can be less robust under aggressive compression. We therefore use Fisher trace as the default routing-fragility proxy.}
\label{fig:wost_softmax}
\end{figure}

\begin{figure}[t!]
    \centering
    \includegraphics[width=1.0\linewidth]{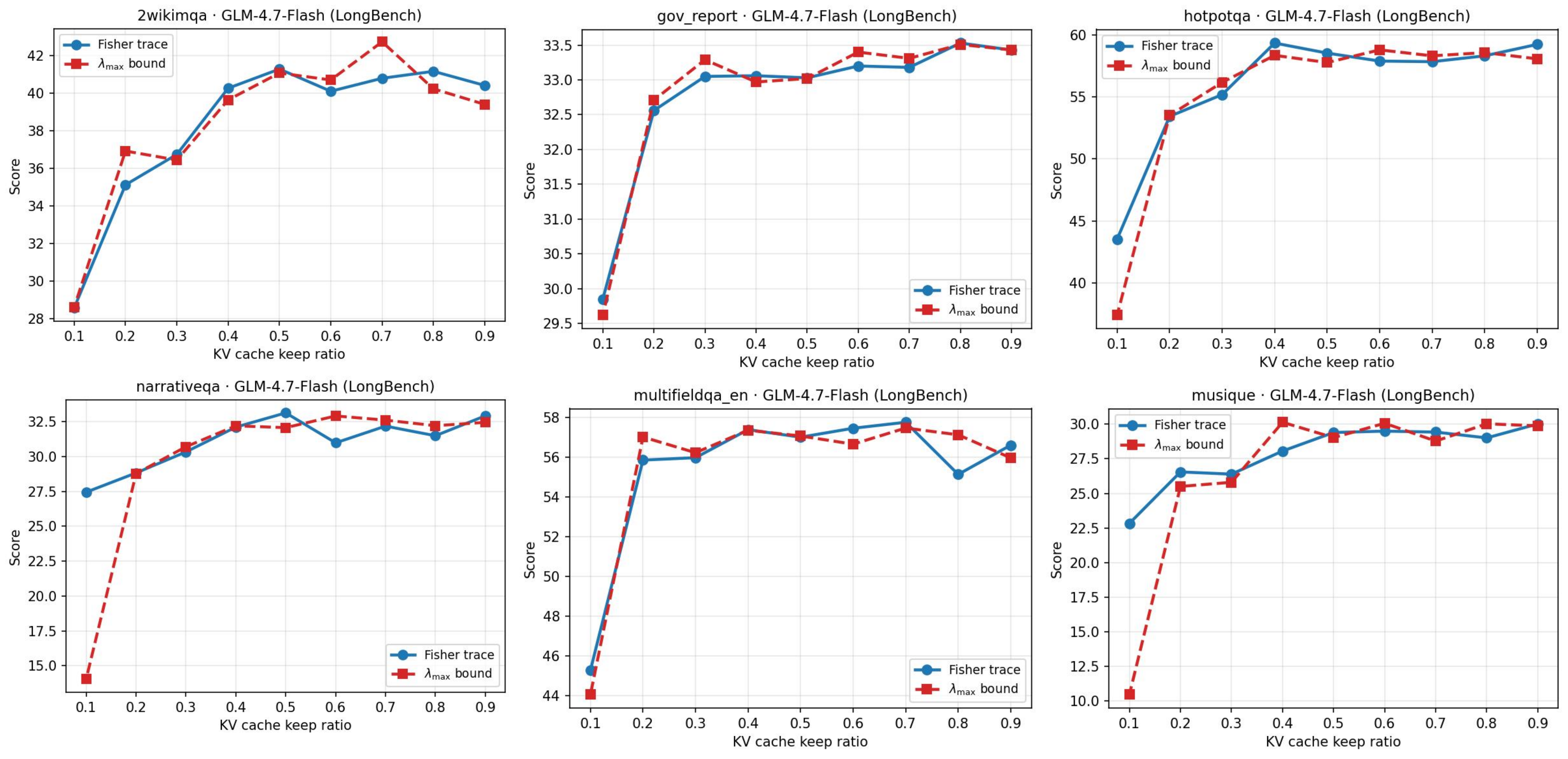}
    \caption{\textbf{Sigmoid-router worst-case score versus Fisher trace.} For sigmoid-style routing, the Fisher matrix is diagonal and the worst-case curvature is given exactly by $\max_e g_e(1-g_e)$. We compare this exact worst-case score with the Fisher trace on six LongBench tasks. The two scores show similar trends overall, but the worst-case score does not consistently outperform Fisher trace and can degrade more sharply in the high-compression regime.}
\label{fig:wost_sigmoid}
\end{figure}

\paragraph{Empirical comparison.}\label{appendix:worst-case_experiments} We compare the Fisher-trace score used in MoEZip with the worst-case curvature alternatives above. For softmax routing, we use the upper bound in Eq.~\eqref{eq:softmax_worst_bound}. For sigmoid-style routing, we use the exact value in Eq.~\eqref{eq:sigmoid_worst_exact}. Figures~\ref{fig:wost_softmax} and~\ref{fig:wost_sigmoid} report task performance across compression ratios.

Overall, the worst-case curvature scores are competitive with Fisher trace in several moderate-compression settings, indicating that worst-case Fisher geometry is also informative for routing fragility. However, they do not consistently outperform Fisher trace across tasks and can be less stable under aggressive compression. This is especially visible in the high-compression regime, where the bound-based score sometimes drops earlier or more sharply than Fisher trace. These results suggest that tighter worst-case curvature does not necessarily translate into better context-position ranking for KV retention.

We therefore use Fisher trace as the default score in MoEZip. Compared with worst-case curvature, Fisher trace provides a smoother direction-agnostic summary of the router state, admits simple closed forms for both softmax- and sigmoid-style routers, and empirically yields more stable performance across tasks and compression ratios.

\section{Implementation Details}
\label{appendix:implementation}

\paragraph{Algorithm.}
Algorithm~\ref{alg:pseudo} summarizes our router-fragility-aware prefix KV compression algorithm\footnote{Source code is provided in the Supplementary Material.}.

\begin{algorithm}[h!]
\caption{Router-Fragility-Aware Prefix KV Compression}
\label{alg:pseudo}
\begin{algorithmic}[1]
\Require Context prefix $x_{1:L}$, instruction tokens $y_{1:T}$, per-(layer, head) retention budget $B$
\Require Base scorer with retention signal $s^{\mathrm{base}}_{\ell,h}(i)$
\Ensure Compressed prefix KV cache
\State Run one prefill pass on $[x_{1:L}; y_{1:T}]$, caching keys/values and routing/attention statistics.
\For{each MoE layer $\ell \in \mathcal{M}$}
    \For{each instruction token $t = 1, \dots, T$}
        \State Obtain router gate $g_{t,\ell}$ and compute token-level routing fragility $f_{t,\ell}$ \Comment{Eq.~\eqref{eq:fragility_def}}
        \State Aggregate instruction-to-context attention $\bar{\alpha}_{t,\ell,i} \gets \max_h \alpha_{t,\ell,h,i}$ for all $i \in \{1,\dots,L\}$ \Comment{Eq.~\eqref{eq:head_reduce}}
    \EndFor
\EndFor
\State Compute context-side routing-importance scores $r(i) \gets \sum_{\ell \in \mathcal{M}} \sum_{t=1}^{T} \bar{\alpha}_{t,\ell,i}\, f_{t,\ell}$ for all $i \in \{1,\dots,L\}$ \Comment{Eq.~\eqref{eq:r_score}}
\State Compute sample-average routing fragility $\Gamma$ \Comment{Eq.~\eqref{eq:gamma_def}}
\State Compute adaptive anchor budget $B_f \gets \lfloor B \cdot \rho(\Gamma) \rfloor$ \Comment{Eq.~\eqref{eq:bf_def}}
\State Select routing-aware anchor set $\mathcal{A}_{\mathrm{anc}} \gets \operatorname{Top}_{B_f}\bigl\{r(i) : i \in \{1,\dots,L\}\bigr\}$
\For{each layer $\ell$ and attention head $h$}
    \State $\mathcal{S}_{\ell,h} \gets \mathcal{A}_{\mathrm{anc}} \cup \mathcal{S}^{\mathrm{base}}_{\ell,h}\bigl(B - B_f;\, \{1,\dots,L\} \setminus \mathcal{A}_{\mathrm{anc}}\bigr)$ \Comment{Eq.~\eqref{eq:anchor_selection}}
\EndFor
\State Evict prefix KV entries not in $\mathcal{S}_{\ell,h}$ for each $(\ell, h)$.
\end{algorithmic}
\end{algorithm}

\paragraph{Prefill-stage routing signals.} We instantiate MoEZip on top of an existing base KV compression method without changing its original masking or allocation rule. Given a context prefix $x_{1:L}$ and observed conditioning tokens $y_{1:T}$, we run a single prefill pass on $[x_{1:L}; y_{1:T}]$. Since the routing fragility of future decoding tokens is unavailable before generation, we treat $y_{1:T}$ as \emph{instruction routing tokens}. At each MoE layer, we collect their router gates and instruction-to-context attention weights.

For each instruction routing token $t$ and MoE layer $\ell$, we compute the token-level routing fragility score $f_{t,\ell}$ from the router gate using Eq.~\eqref{eq:fragility_def}. For softmax-routed models, this reduces to Eq.~\eqref{eq:fragility_softmax_closed}; for sigmoid-style models, we use Eq.~\eqref{eq:fragility_sigmoid_closed}. To aggregate instruction-to-context dependencies across attention heads, we use max pooling:
\begin{equation}
\bar{\alpha}_{t,\ell,i}
=
\max_h \alpha_{t,\ell,h,i}.
\end{equation}
We then compute the routing-importance score $r(i)$ in Eq.~\eqref{eq:r_score} and select the top-$B_f$ context positions as routing-aware anchors.

\paragraph{Adaptive anchor count.}
For adaptive budgeting, we compute the sample-level fragility ratio $\Gamma$ as the fraction of instruction-token router states whose fragility score exceeds the threshold $\tau$. We convert $\Gamma$ into an anchor ratio by
\begin{equation}
\rho(\Gamma)
=
\rho_{\min}
+
(\rho_{\max}-\rho_{\min})\Gamma.
\end{equation}
Given the current retention budget $B$ of the base scorer, we set the number of routing-aware anchors to
\begin{equation}
B_f
=
\operatorname{clip}
\!\left(
\left\lfloor B\,\rho(\Gamma)\right\rfloor,\;
B_{\min},\;
B_{\max}
\right).
\end{equation}
The clipping bounds are not task-specific tuning parameters; they only prevent the anchor set from becoming empty under tight budgets or overly dominant under large budgets. Unless otherwise stated, we use $\rho_{\min}=0.05$, $\rho_{\max}=0.25$, $B_{\min}=32$, and $B_{\max}=128$ 
in all experiments. We set the threshold to 85\% of the theoretical maximum Fisher trace: \(\tau=0.85(1-1/E)\) for softmax routing and \(\tau=0.85E/4\) for sigmoid routing, where \(E\) denotes the number of routed experts. We emphasize that $\tau$ is used only to estimate the sample-level fragility ratio $\Gamma$ in Eq.~(\ref{eq:gamma_def}); context-position ranking is still governed by the continuous routing-fragility score in Eq.~(\ref{eq:fragility_def}). Therefore,
MoEZip is not driven by hard thresholding of individual context positions. Sensitivity to $\rho_{\min}$ and $\rho_{\max}$ is analyzed in
Appendix~\ref{appendix:sensitivity_rho}.

\paragraph{Evaluation metrics.}
For LongBench~\cite{bai2024longbench}, we follow the official evaluation protocol and use the standard
metric associated with each task: F1 for question-answering tasks, ROUGE-L for summarization tasks, accuracy for classification and few-shot learning tasks, and edit similarity for code-completion tasks. For RULER~\cite{hsieh2024ruler}, we use the official evaluation scripts and report the average score over its 13 tasks. For NIAH~\cite{fu2024data}, we report Rouge-L F1 between the predicted answer and the target needle answer, treating needle retrieval as an answer-extraction task.

\paragraph{Compute resources.}\label{appendix:resource} All experiments were conducted on NVIDIA A100 GPUs with 80GB memory. Each evaluation run used two A100 GPUs, except for the 235B-model experiments described in Appendix~\ref{app:large-model}, which used eight. The reported experiments, including MoEZip and all baselines, required approximately 15{,}000--16{,}000 A100 GPU-hours in total. This estimate excludes preliminary exploratory or failed runs.

\begin{table}[h!]
    \centering
    \caption{\textbf{Scoring-time overhead of MoEZip on Qasper.} We compare the average scoring time of the base KVZip scorer and MoEZip under the same evaluation setting. The additional overhead of Fisher-based scoring remains modest.}
    \begin{tabular}{c|c|c}
        Method & Avg. Scoring Time (s) & Additional Time (s) \\ \hline
        KVZip (base) & 9.36 & - \\
        MoEZip (+Fisher score) & 10.24 & 0.88 \\
    \end{tabular}
    \label{tab:inference_qasper}
\end{table}

\begin{figure}[h!]
    \centering
    \includegraphics[width=0.6\linewidth]{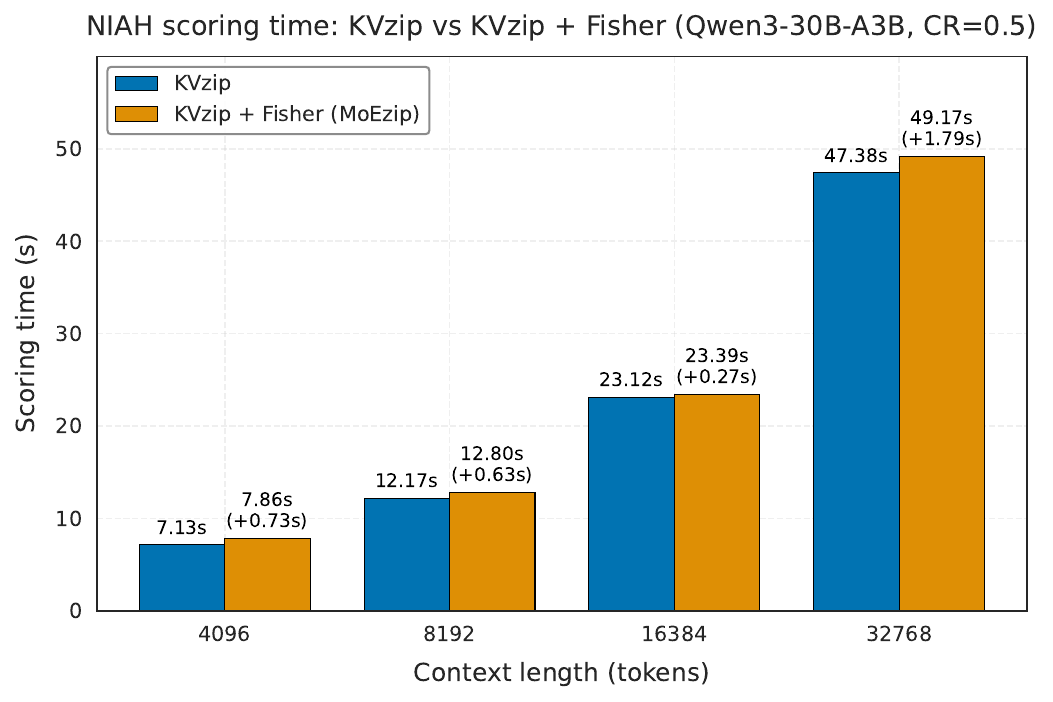}
    \caption{\textbf{Scoring-time overhead on NIAH.} Adding the proposed Fisher-based routing refinement to KVZip introduces only modest additional scoring cost on Qwen3-30B-A3B at a KV cache ratio of 0.5, with absolute overhead ranging from 0.27s to 1.79s across context lengths.}
    
    \label{fig:inference_niah}
\end{figure}

\paragraph{Query conditioning and cache reuse.}
\label{appendix:query_conditioning}
MoEZip-Recon uses the same fixed reconstruction query as KVZip.
It reuses router and attention statistics from that reconstruction pass
to compute routing-aware anchors, requiring no additional model forward
pass beyond KVZip's reconstruction pass.
Its final compressed cache is independent of downstream instructions
and can therefore be reused across requests.
The original MoEZip instead uses the downstream instruction for
anchor selection. A matched comparison is reported in
Appendix~\ref{app:query-agnostic}.

\paragraph{Observation-window fairness.}
MoEZip uses the template-defined downstream instruction span.
SnapKV and PyramidKV use their standard final-64-token observation window.
After applying the evaluated prompt templates and tokenizer, the complete
instruction span is contained within this window for every example
across the 12 LongBench tasks; the maximum instruction length is 64 tokens.
For this check, the downstream instruction is distinguished from task
content embedded in raw question fields, such as passages, dialogues,
and retrieval abstracts.
The KVZip--MoEZip-Recon comparison additionally matches the reconstruction
query, base scorer, retained-KV budget, and reusable-cache setting.

\section{Complexity}
\label{appendix:complexity}

Our method introduces only a modest additional overhead over the base scoring procedure. The extra computation comes from: (i) computing instruction-token router fragility scores, (ii) aggregating instruction-to-context attention, (iii) forming the position-wise refinement score in Eq.~\eqref{eq:r_score}, and (iv) incorporating this refinement into the base score before selection.

Let $|\mathcal{I}|$ denote the number of instruction tokens, $L$ the prefix length, $E$ the number of experts, and $\mathcal M$ the set of MoE layers used for refinement. The additional cost of router fragility computation is $O(|\mathcal M||\mathcal{I}|E)$, and the attention-based aggregation over context positions costs $O(|\mathcal M||\mathcal{I}|L)$, giving a total overhead of
\begin{equation}
O\!\bigl(|\mathcal M||\mathcal{I}|(E+L)\bigr).
\end{equation}
In long-context settings where $L \gg E$, this is dominated by $O(|\mathcal M||\mathcal{I}|L)$.

In practice, the overhead is small because our method reuses router outputs and attention statistics from the scoring stage and does not require any additional transformer forward pass. Since $|\mathcal{I}| \ll L$ in typical tasks, the added cost is modest relative to the overall prefill cost. 

We measure the overhead of Fisher-based scoring for MoEZip in Table~\ref{tab:inference_qasper} and Fig.~\ref{fig:inference_niah}.

\section{Additional Analyses}\label{appendix:additional_experiments}

\begin{figure}[h!]
    \centering
    \includegraphics[width=1.0\linewidth]{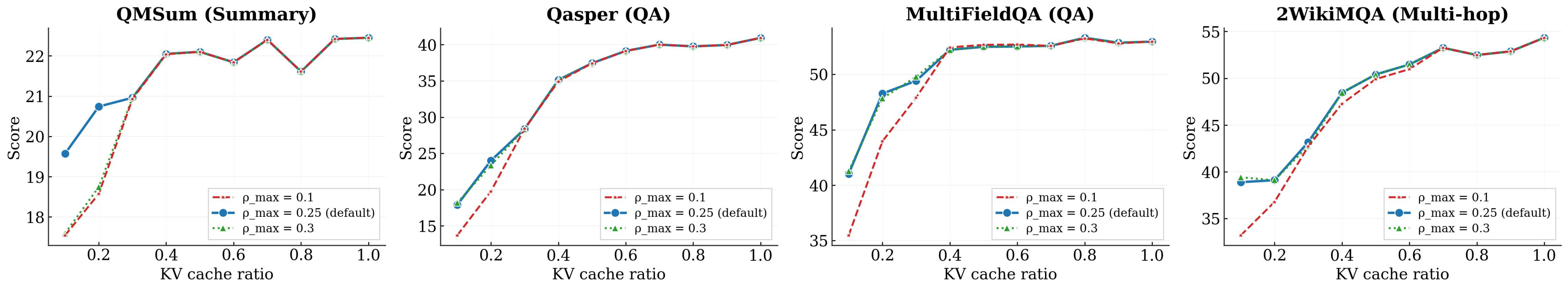}
    \caption{\textbf{Sensitivity analysis of $\rho_{\max}$ on representative LongBench tasks.} We vary the maximum anchor ratio $\rho_{\max}\in\{0.1,0.25,0.3\}$ while fixing $\rho_{\min}=0.05$. Across QMSum, Qasper, MultiFieldQA, and 2WikiMQA, the overall performance is relatively stable over a broad range of $\rho_{\max}$ values. The default setting $\rho_{\max}=0.25$ consistently provides strong or near-best performance, while overly small values can slightly weaken robustness in the high-compression regime by limiting the number of route-aware anchors.}
    \label{fig:sensitivity_rho_max}
\end{figure}

\begin{figure}[h!]
    \centering
    \includegraphics[width=1.0\linewidth]{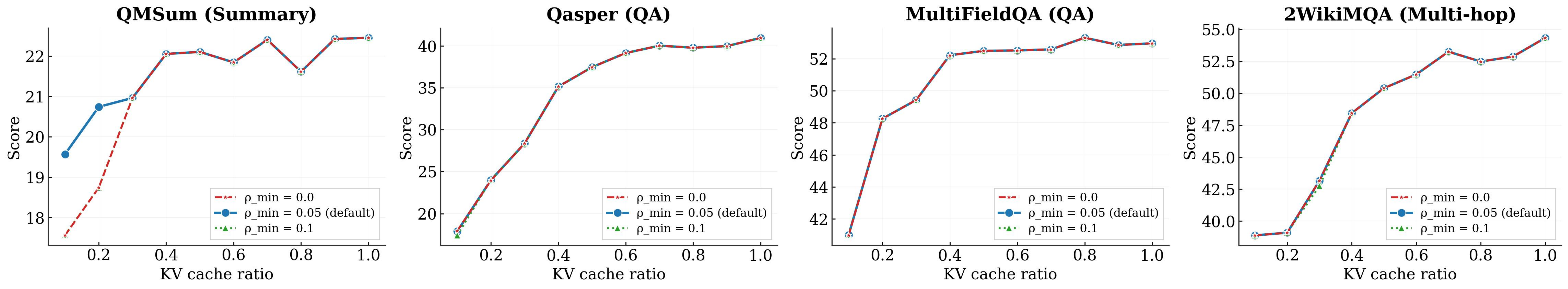}
    \caption{\textbf{Sensitivity analysis of $\rho_{\min}$ on representative LongBench tasks.} We vary the minimum anchor ratio $\rho_{\min}\in\{0.0,0.05,0.1\}$ while fixing $\rho_{\max}=0.25$. The results remain highly stable across all four tasks, indicating that MoEZip is not overly sensitive to the exact choice of $\rho_{\min}$. The default setting $\rho_{\min}=0.05$ achieves consistently strong performance, while different values lead only to minor variations across KV cache ratios.}
    \label{fig:sensitivity_rho_min}
\end{figure}

\subsection{Sensitivity Analysis of $\rho_{\min}$ and $\rho_{\max}$.}
\label{appendix:sensitivity_rho}

We study the sensitivity of MoEZip to the adaptive anchor-budget parameters $\rho_{\min}$ and $\rho_{\max}$ on four representative LongBench tasks: QMSum, Qasper, MultiFieldQA, and 2WikiMQA. Recall that the anchor ratio is defined as
\[
\rho(\Gamma)=\rho_{\min}+(\rho_{\max}-\rho_{\min})\Gamma,
\]
where $\Gamma$ denotes the sample-level routing fragility ratio. Thus, $\rho_{\min}$ controls the minimum degree of route-aware protection even for relatively stable samples, while $\rho_{\max}$ determines the maximum fraction of anchors allocated to highly fragile samples.

Fig.~\ref{fig:sensitivity_rho_max} varies $\rho_{\max}$ while fixing $\rho_{\min}=0.05$. Overall, the method is fairly robust across different choices of $\rho_{\max}$, and the default setting $\rho_{\max}=0.25$ consistently achieves strong or near-best performance. When $\rho_{\max}$ is set too small, the method can become slightly less robust under aggressive compression, since the route-aware component is prevented from allocating enough anchors to highly fragile cases. Increasing $\rho_{\max}$ beyond the default yields only marginal changes, suggesting that the method does not require precise tuning of the upper anchor budget.

Fig.~\ref{fig:sensitivity_rho_min} varies $\rho_{\min}$ while fixing $\rho_{\max}=0.25$. The results are even more stable across different choices of $\rho_{\min}$, with only minor variations observed over KV cache ratios and tasks. This indicates that MoEZip is not overly sensitive to the exact amount of minimum route-aware protection, as long as a modest baseline level of anchors is preserved.

Overall, these results show that MoEZip is robust to the choice of adaptive budget parameters. The default setting $(\rho_{\min},\rho_{\max})=(0.05,0.25)$ provides a good balance across tasks without requiring careful per-task tuning.

\begin{figure}[h!]
    \centering
    \includegraphics[width=1.0\linewidth]{quality_cliff_all.pdf}
    \caption{\textbf{Task quality and router flip rate across compression ratios.} We compare task performance and expert flip rate of answer-tokens under increasing KV compression. Across all six tasks, MoEZip consistently maintains lower flip rates than KVZip. The shaded region marks the compression range where KVZip begins to exhibit a sharp quality drop, while MoEZip remains notably more stable.}
    \label{fig:quality_cliff}
\end{figure}

\begin{figure}[t!]
    \centering
    \begin{minipage}[t]{0.48\linewidth}
        \centering
        \includegraphics[width=\linewidth]{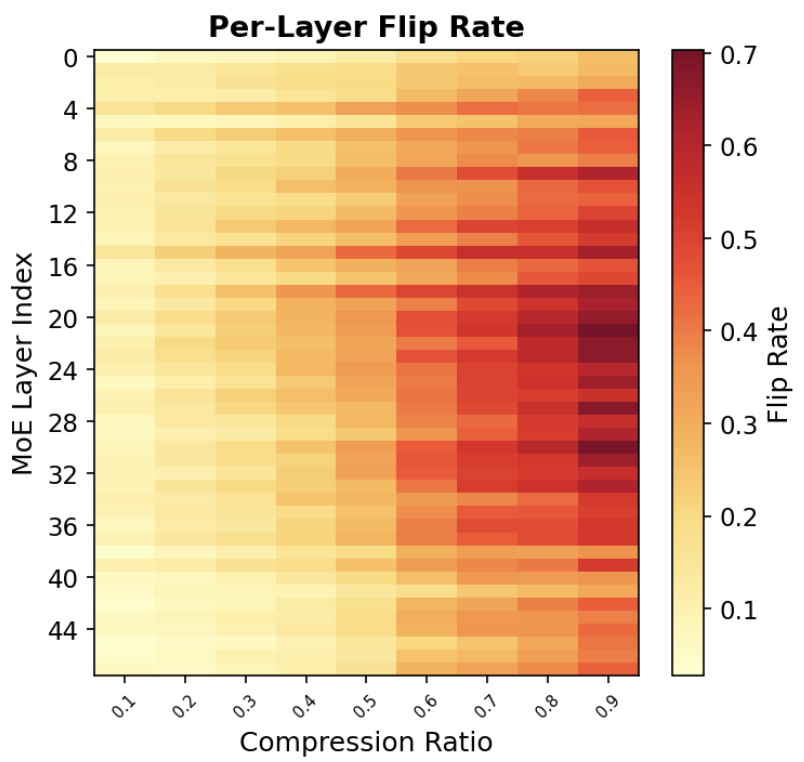}
        \caption{\textbf{Layer-wise router flip rate of KVZip across compression ratios, averaged over Qasper and HotpotQA.} Router instability increases steadily with compression and becomes most pronounced in the middle-to-late MoE layers.}
        \label{fig:layer_flip_kvzip}
    \end{minipage}\hfill
    \begin{minipage}[t]{0.48\linewidth}
        \centering
        \includegraphics[width=\linewidth]{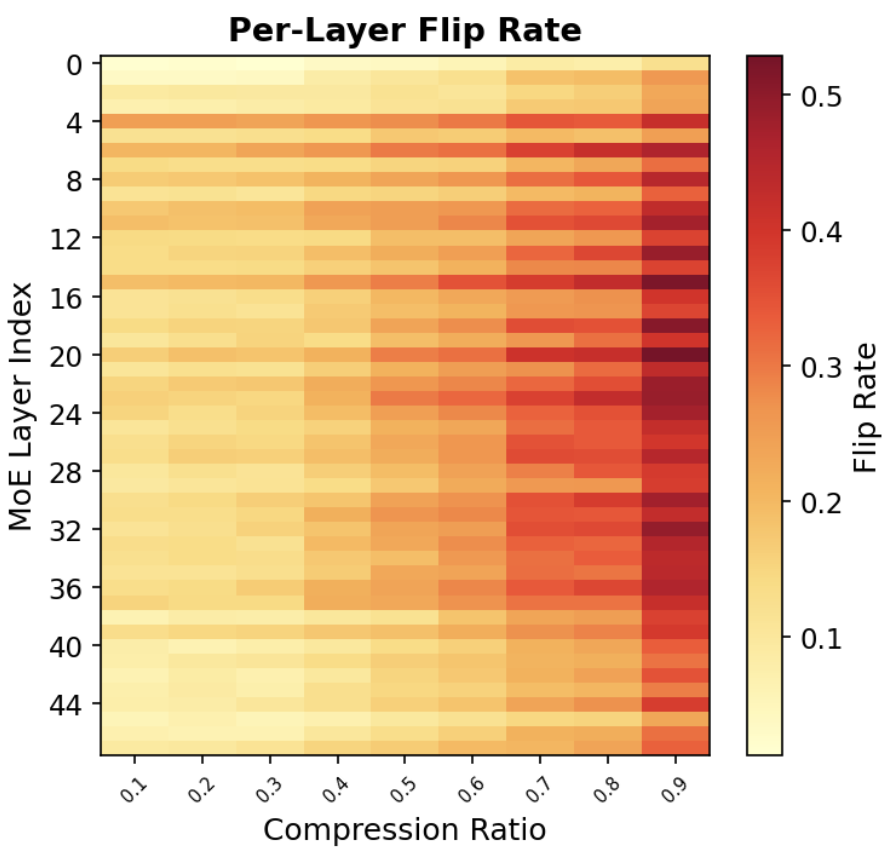}
        \caption{\textbf{Layer-wise router flip rate of SnapKV across compression ratios, averaged over Qasper and HotpotQA.} Similar to KVZip, router instability grows under aggressive compression and is concentrated in the middle-to-late MoE layers.}
        \label{fig:layer_flip_snapkv}
    \end{minipage}
\end{figure}

\begin{figure}[t!]
    \centering
    \begin{minipage}[t]{0.48\linewidth}
        \centering
        \includegraphics[width=\linewidth]{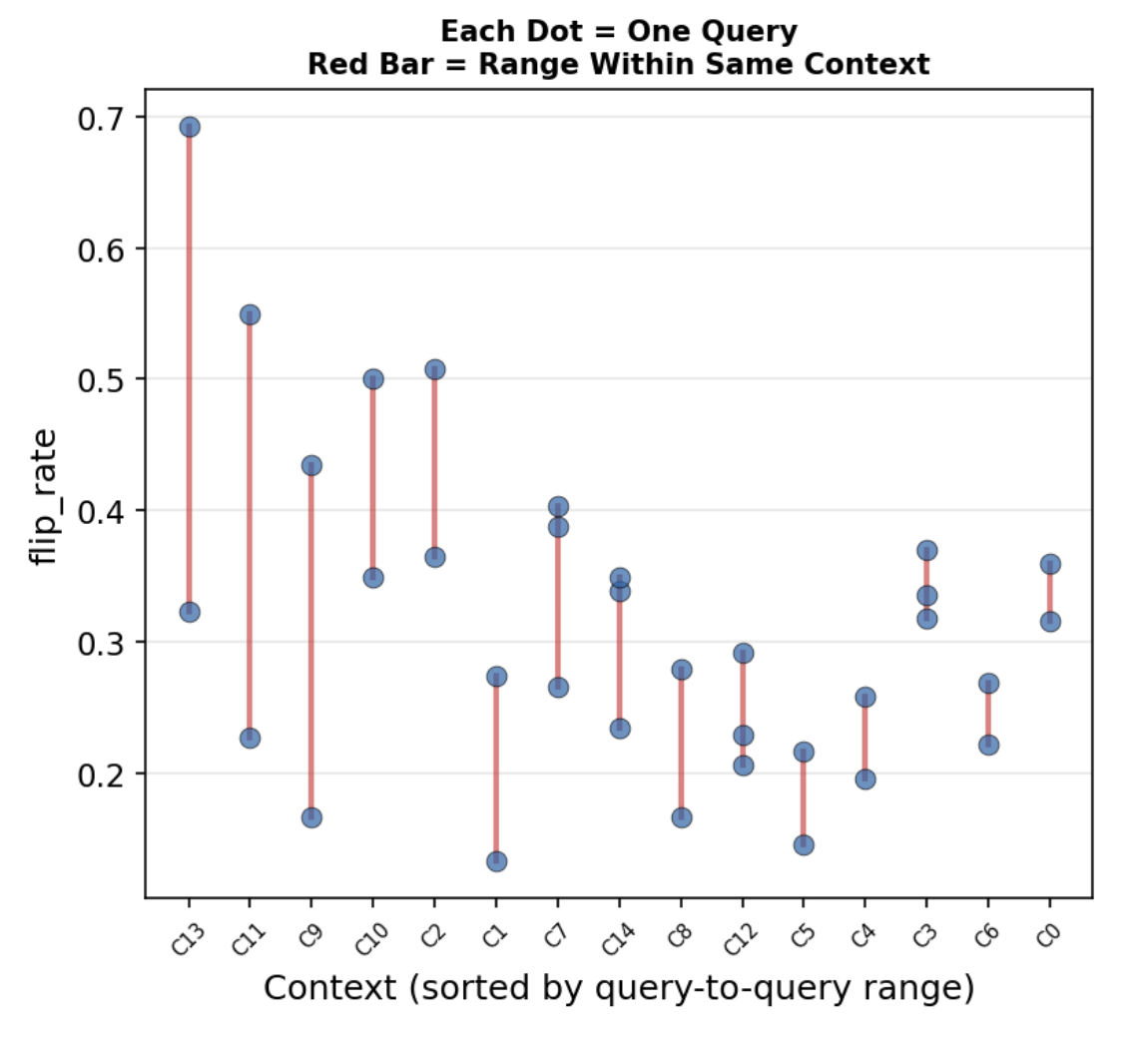}
        \caption{\textbf{Input-wise router flip rates of KVZip on sampled SQuAD v2 contexts.} Each dot denotes one input associated with the same context, and the red bar indicates the within-context range. Even under an identical context, router flip rates vary substantially across inputs, showing that routing instability is strongly input-dependent.}
        \label{fig:query_flip_kvzip}
    \end{minipage}\hfill
    \begin{minipage}[t]{0.48\linewidth}
        \centering
        \includegraphics[width=\linewidth]{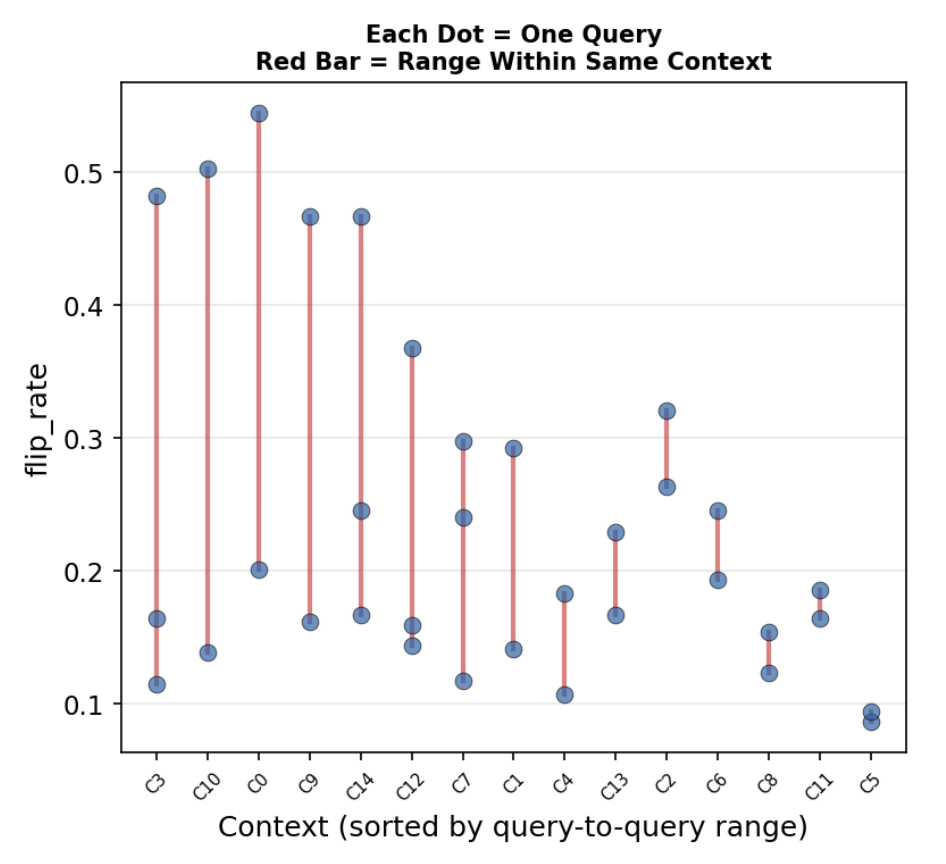}
        \caption{\textbf{Input-wise router flip rates of SnapKV on sampled SQuAD v2 contexts.} Each dot denotes one input associated with the same context, and the red bar indicates the within-context range. Router instability still varies noticeably across inputs sharing the same context, indicating that it depends not only on the context but also on the input.}
        \label{fig:query_flip_snapkv}
    \end{minipage}
\end{figure}

\subsection{Answer-Token Flip Rate per Compression Ratio}
\label{appendix:flip_analysis}

Fig.~\ref{fig:quality_cliff} compares task performance and answer-token expert flip rate across compression ratios on six representative tasks. Consistent with the main-text observation, the flip rate rises as compression becomes more aggressive, and this increase is closely associated with downstream quality degradation. However, the relationship is not merely descriptive: compared with KVZip, MoEZip consistently exhibits lower flip rates across all tasks, and this reduction is accompanied by substantially improved robustness in task performance.

This pattern is especially clear in the high-compression regime. On several tasks, KVZip maintains competitive performance up to a moderate compression level and then exhibits a sharp quality collapse once the flip rate rises sufficiently. By contrast, MoEZip delays this transition by suppressing router instability, thereby preserving answer quality over a broader compression range. These results support our central claim that routing instability is a key failure mode of KV compression in MoE models, and that reducing expert flips is closely tied to improved downstream robustness.

\subsection{Layer-Wise Router Instability under KV Compression}
\label{appendix:layerwise_flip}

We further examine where routing instability emerges across MoE layers. 
For each MoE layer, we compare the top-1 expert selected with the full KV cache and with the compressed KV cache, and report the fraction of tokens whose expert assignment changes.

As shown in Fig.~\ref{fig:layer_flip_kvzip} and Fig.~\ref{fig:layer_flip_snapkv}, both KVZip and SnapKV exhibit a clear increase in router flip rate as compression becomes more aggressive. 
In addition, the instability is not uniform across depth: the flip rate is relatively low in early layers but becomes substantially higher in deeper layers, especially in the middle-to-late MoE layers. 
This suggests that routing perturbations induced by KV eviction are not limited to isolated shallow routing decisions, but can affect expert selection across a broad range of MoE layers.

These results motivate incorporating routing-aware protection into KV compression. 
However, layer-wise flip variation alone does not imply that the routing-aware score should replace fine-grained layer/head-specific KV scoring. 
Instead, MoEZip uses the routing-aware score as a conservative position-level anchor signal: it selects a small set of context positions that are likely to be important for routing stability and protects them across layers. 
The remaining KV budget is still allocated by the layer/head-specific base scorer, allowing MoEZip to combine coarse routing-aware protection with fine-grained KV allocation.

\subsection{Router Flip Rate across Different Inputs for the Same Context}
\label{appendix:query_flip}

We further analyze router instability across different inputs that share the same context. 
For each input, we compare the top-1 expert selected with the full KV cache and with the compressed KV cache, and report the input-wise router flip rate under the same compressed context.

As shown in Fig.~\ref{fig:query_flip_kvzip} and Fig.~\ref{fig:query_flip_snapkv}, both KVZip and SnapKV exhibit substantial within-context variation in router flip rates: some inputs remain relatively stable, while others induce much larger routing disruption under the same context and compression ratio. 
This indicates that routing instability is input-dependent and cannot be fully explained by context-side saliency alone.

This supports the need for instruction-conditioned refinement. 
A context-only score assigns the same importance to context positions regardless of the downstream input, but the routing-critical positions to preserve may vary across inputs. 
MoEZip therefore estimates routing fragility from the instruction tokens and uses instruction-to-context attention to identify input-specific routing-aware anchors.

\subsection{Paired Bootstrap Confidence Intervals}
\label{appendix:CI}

\begin{table}[h!]
\centering
\small
\caption{\textbf{Paired task-level bootstrap intervals for LongBench.} Each entry reports the mean score difference between MoEZip and a baseline, with 95\% percentile intervals over 10{,}000 bootstrap samples. All intervals exclude zero, indicating consistent improvements under task-level resampling.}
\begin{tabular}{lccc}
\toprule
Comparison & Keep 0.1 & Keep 0.3 & Keep 0.5 \\
\midrule
MoEZip $-$ KVZip
& $+25.4$ [$+13.5$, $+36.9$]
& $+8.7$ [$+5.3$, $+11.9$]
& $+0.58$ [$+0.17$, $+1.12$] \\
MoEZip $-$ SnapKV
& $+13.8$ [$+7.8$, $+21.1$]
& $+6.1$ [$+3.3$, $+9.4$]
& $+3.3$ [$+1.8$, $+4.9$] \\
MoEZip $-$ PyramidKV
& $+16.3$ [$+9.2$, $+25.4$]
& $+14.5$ [$+6.7$, $+25.2$]
& $+13.9$ [$+7.5$, $+21.2$] \\
\bottomrule
\end{tabular}
\label{tab:paired_bootstrap_longbench}
\end{table}

\begin{table}[h!]
\centering
\small
\caption{\textbf{Paired task-level bootstrap intervals for RULER.} Each entry reports the mean score difference between MoEZip and a baseline, with 95\% percentile intervals over 10{,}000 bootstrap samples. All intervals exclude zero, indicating consistent improvements under task-level resampling.}
\begin{tabular}{lccc}
\toprule
Comparison & Keep 0.1 & Keep 0.3 & Keep 0.5 \\
\midrule
MoEZip $-$ KVZip
& $+20.4$ [$+9.2$, $+34.9$]
& $+16.1$ [$+6.3$, $+27.9$]
& $+5.6$ [$+0.2$, $+13.3$] \\
MoEZip $-$ SnapKV
& $+9.7$ [$+1.6$, $+25.1$]
& $+55.1$ [$+35.3$, $+73.9$]
& $+53.5$ [$+33.7$, $+72.1$] \\
MoEZip $-$ PyramidKV
& $+9.8$ [$+1.6$, $+25.1$]
& $+60.4$ [$+42.5$, $+76.8$]
& $+71.5$ [$+54.3$, $+87.6$] \\
\bottomrule
\end{tabular}
\label{tab:paired_bootstrap_ruler}
\end{table}

We perform paired task-level bootstrap analysis to assess whether MoEZip's improvements over each baseline are robust to task composition. For each bootstrap sample, we resample tasks with replacement and compute the mean score difference between MoEZip and each baseline on the same resampled task set.
We report 95\% percentile intervals over 10{,}000 bootstrap samples.

As shown in Tables~\ref{tab:paired_bootstrap_longbench} and~\ref{tab:paired_bootstrap_ruler}, MoEZip outperforms all baselines across all reported keep ratios on both LongBench and RULER, with all paired intervals excluding zero. These results indicate that MoEZip's improvements are robust to task-level resampling across the reported aggregate settings.

\subsection{Query-Agnostic Compression and Cache Reuse}
\label{app:query-agnostic}
\label{appendix:query_agnostic}

We compare KVZip, MoEZip-Recon, and instruction-conditioned MoEZip on Qwen3-30B-A3B-Instruct-2507 across 12 LongBench tasks. KVZip and MoEZip-Recon use the same fixed reconstruction query, base scoring procedure, and retained-KV budget, and both produce caches reusable across downstream requests. The original MoEZip instead conditions anchors on the downstream instruction.

\begin{table}[t]
\centering
\small
\setlength{\tabcolsep}{6pt}
\begin{tabular}{lccc}
\toprule
\textbf{Method} & \shortstack{\textbf{Reusable}\\\textbf{cache}}
& \textbf{Keep 0.2} & \textbf{Keep 0.1} \\
\midrule
KVZip        & Yes & 29.25 & 19.51 \\
MoEZip-Recon & Yes & 45.14 & 41.95 \\
MoEZip       & No  & \textbf{49.59} & \textbf{44.89} \\
\bottomrule
\end{tabular}
\caption{\textbf{Query conditioning and compressed-cache reuse.}
LongBench 12-task macro scores on Qwen3-30B-A3B-Instruct-2507.
All methods use matched retained-KV budgets.}
\label{tab:query-agnostic-appendix}
\end{table}

At keep ratio $0.1$, MoEZip-Recon improves over KVZip by $22.44$ points
and remains $2.94$ points below instruction-conditioned MoEZip.
It retains approximately $88\%$ of MoEZip's improvement over KVZip;
the corresponding fraction at keep ratio $0.2$ is $78\%$.
Thus, substantial routing-aware gains persist without downstream
query conditioning, while instruction conditioning provides an
additional quality benefit.
The reuse comparison concerns the final compressed context cache;
it does not claim identical end-to-end costs for the two deployment modes.

\subsection{Global versus Fine-Grained Anchor Selection}
\label{app:anchor-granularity}
\label{appendix:anchor_granularity}

We compare global anchors with per-layer and per-GQA selection on
Qwen3-30B-A3B-Instruct-2507 at keep ratio $0.1$. For the fine-grained variants, local budgets are normalized so that the total number of retained KV elements matches the global method. All variants use the same retention budget $B$ and anchor budget $B_f$ for each layer--head cache, differing only in how anchor positions are scored and selected.

\begin{table}[t]
\centering
\small
\begin{tabular}{lrrr}
\toprule
\textbf{Evaluation} & \textbf{Global}
& \textbf{Per-layer} & \textbf{Per-GQA} \\
\midrule
LongBench 12-task macro & \textbf{44.9} & 36.3 & 39.3 \\
Gap relative to global & \textemdash & $-8.6$ & $-5.6$ \\
Tasks won by global    & \textemdash & 12/12 & 11/12 \\
\bottomrule
\end{tabular}
\caption{\textbf{Anchor-selection granularity under matched KV budgets.}
The base compressor retains layer/head-specific selection
for all non-anchor entries.}
\label{tab:anchor-granularity}
\end{table}

Global selection exceeds per-layer and per-GQA selection by $8.6$
and $5.6$ macro points, respectively.
It wins on all 12 tasks against the per-layer variant and on 11 of 12
tasks against the per-GQA variant.
These results support global sharing among the evaluated alternatives,
but do not establish optimality among all joint cross-layer strategies.
MoEZip shares only routing-aware anchor positions; the remaining budget
is filled independently by the base compressor at each layer and head.

\subsection{Complementarity of Routing-Aware Anchors}
\label{app:anchor-overlap}
\label{appendix:anchor_overlap}

\paragraph{Layer--head-aware overlap.}
Let $\mathcal{A}_{\mathrm{anc}}$ be the global anchor set, with
$|\mathcal{A}_{\mathrm{anc}}|=B_f$, and let
$\mathcal{S}_{\ell,h}^{\mathrm{KVZip}}$ denote the positions retained
by KVZip at layer $\ell$ and KV head $h$.
For the set $\mathcal{C}$ of compressed layer--head caches, define
\begin{equation}
\operatorname{Overlap}
=\frac{1}{|\mathcal{C}|}
\sum_{(\ell,h)\in\mathcal{C}}
\frac{|\mathcal{A}_{\mathrm{anc}}\cap
        \mathcal{S}_{\ell,h}^{\mathrm{KVZip}}|}{B_f}.
\label{eq:anchor-overlap}
\end{equation}
The corresponding mean number of newly protected anchors per cache is
\begin{equation}
\begin{aligned}
\operatorname{New}
&=\frac{1}{|\mathcal{C}|}\sum_{(\ell,h)\in\mathcal{C}}
|\mathcal{A}_{\mathrm{anc}}\setminus
  \mathcal{S}_{\ell,h}^{\mathrm{KVZip}}|\\
&=B_f(1-\operatorname{Overlap}).
\end{aligned}
\label{eq:anchor-new}
\end{equation}
This accounts for a position retained in one cache but absent from others.

\paragraph{Results.}
We evaluate Qwen3-30B-A3B-Instruct-2507 at keep ratio $0.1$ with $B_f=128$.
KVZip already retains only $15$--$24\%$ of the anchors in an average
layer--head cache. Thus, approximately $76$--$85\%$ of anchor slots
protect positions not already retained by KVZip in that cache.

\begin{table}[t]
\centering
\small
\begin{tabular}{lrr}
\toprule
\textbf{Task} & \textbf{Overlap} & \textbf{New / 128} \\
\midrule
RULER QA-1       & 15\% & 108 \\
NIAH-single-2    & 16\% & 108 \\
NIAH-multikey-1  & 16\% & 107 \\
NIAH-multiquery  & 16\% & 107 \\
LCC             & 17\% & 106 \\
TREC            & 20\% & 103 \\
Qasper          & 19\% & 104 \\
PsgRetrieval    & 19\% & 104 \\
HotpotQA        & 24\% & 97 \\
\bottomrule
\end{tabular}
\caption{\textbf{Layer--head-aware anchor overlap at $B_f=128$.}
New / 128 is the mean number of anchor positions absent from
the corresponding KVZip cache.
Percentages and counts are rounded independently.}
\label{tab:anchor-overlap}
\end{table}

\paragraph{Complementarity and utility.}
Anchors occupy part of the existing KV budget rather than adding entries.
Overlapping positions are counted once, and the base scorer fills
the remaining slots.
Low overlap establishes different protection patterns, but does not
by itself establish utility.
The matched query-agnostic comparison in
Appendix~\ref{app:query-agnostic} provides complementary evidence:
MoEZip-Recon improves the macro from $19.51$ to $41.95$ under the
same reconstruction query and reusable-cache setting.

\subsection{Evaluation on a 235B MoE Model}
\label{app:large-model}
\label{appendix:large_model}

We evaluate Qwen3-235B-A22B-Instruct-2507 at keep ratio $0.1$.
Relative to Qwen3-30B-A3B, this increases total and active parameter
counts while retaining the Qwen3 family and softmax-routing design.
It therefore evaluates transfer across scales within the same family. The 235B-model experiments used eight NVIDIA A100 GPUs with 80GB memory each.

\paragraph{Comparison with attention-only selection.}
On a four-task subset, MoEZip improves the macro from $16.7$ with
KVZip to $35.5$ and exceeds attention-only selection by $11.4$ points. The latter comparison supports a contribution beyond instruction-conditioned attention alone.

\begin{table}[t]
\centering
\small
\setlength{\tabcolsep}{4pt}
\begin{tabular}{lrrrrr}
\toprule
\textbf{Method} & \textbf{HotpotQA} & \textbf{PsgRet}
& \textbf{Qasper} & \textbf{QMSUM} & \textbf{Macro} \\
\midrule
Uncompressed   & 57.0 & 100.0 & 46.9 & 23.6 & 56.9 \\
KVZip          & 35.2 & 4.0   & 10.9 & 16.8 & 16.7 \\
Attention-only & 31.4 & 38.8  & 11.0 & 15.0 & 24.1 \\
MoEZip & \textbf{45.6} & \textbf{61.5}
& \textbf{17.5} & \textbf{17.3} & \textbf{35.5} \\
\bottomrule
\end{tabular}
\caption{\textbf{Four-task evaluation on Qwen3-235B-A22B.}
Compressed methods use keep ratio $0.1$.
Bold indicates the best compressed result; the uncompressed model
is a reference.}
\label{tab:235b-controls}
\end{table}

\paragraph{Broader task coverage.}
Across 10 LongBench tasks, MoEZip improves the macro from $27.6$
to $44.7$ and outperforms KVZip on every task.
Gains extend across retrieval, classification, code completion,
and question answering.

\begin{table}[t]
\centering
\small
\setlength{\tabcolsep}{7pt}
\begin{tabular}{lrrr}
\toprule
\textbf{Task} & \textbf{KVZip} & \textbf{MoEZip}
& $\boldsymbol{\Delta}$ \\
\midrule
PsgRet       & 4.0  & \textbf{61.5} & $+57.5$ \\
TREC         & 16.0 & \textbf{50.0} & $+34.0$ \\
LCC          & 28.8 & \textbf{55.9} & $+27.1$ \\
MultiFieldQA & 20.7 & \textbf{35.6} & $+14.8$ \\
TriviaQA     & 75.5 & \textbf{86.9} & $+11.4$ \\
HotpotQA     & 35.2 & \textbf{45.6} & $+10.5$ \\
2WikiMQA     & 24.7 & \textbf{30.2} & $+5.5$ \\
NarrativeQA  & 22.4 & \textbf{26.3} & $+3.9$ \\
SAMSum       & 31.1 & \textbf{34.2} & $+3.1$ \\
MuSiQue      & 17.4 & \textbf{20.3} & $+2.9$ \\
\midrule
10-task macro & 27.6 & \textbf{44.7} & $+17.1$ \\
\bottomrule
\end{tabular}
\caption{\textbf{Ten-task evaluation on Qwen3-235B-A22B at keep ratio $0.1$.}
Deltas use unrounded task scores and can differ slightly from subtraction
of displayed values. This task set differs from the four-task control
comparison in Table~\ref{tab:235b-controls}.}
\label{tab:235b-ten-task}
\end{table}

\paragraph{Routing-signal transfer.}
We evaluate Fisher-based ranking on HotpotQA and PassageRetrieval-en,
forming quintiles separately within each task and layer. The highest Fisher quintile exhibits $2.8\times$ and $2.5\times$ the flip rate of the lowest quintile on Qwen3-30B and Qwen3-235B, respectively. Instruction--answer anchor agreement is also comparable across scales. Together, these diagnostics support transfer of both routing-related signals to the larger model under aggressive compression. This diagnostic uses 50 sampled examples per task. The sampled subsets differ from those used in Appendix~\ref{app:fisher-aggressive}.

\begin{table}[t]
\centering
\small
\setlength{\tabcolsep}{5pt}
\begin{tabular}{lcc}
\toprule
\textbf{Diagnostic}
& \textbf{Qwen3-30B}
& \textbf{Qwen3-235B} \\
\midrule
Fisher Q5/Q1 flip-rate ratio
& $2.8\times$ & $2.5\times$ \\
Anchor Jaccard: HotpotQA
& .88 & .92 \\
Anchor Jaccard: PsgRetrieval
& .96 & .97 \\
\bottomrule
\end{tabular}
\caption{\textbf{Routing-signal diagnostics across model scales.}
Results at keep ratio $0.1$.
Fisher quintiles are formed within each task and layer,
with flip rates aggregated over HotpotQA and PassageRetrieval-en.
Jaccard similarity compares instruction- and answer-conditioned
anchor sets.}
\label{tab:235b-anchor-agreement}
\end{table}

\subsection{Composition with INT8 KV Quantization}
\label{app:kv-quantization}
\label{appendix:kv_quantization}

\paragraph{Setup.}
We evaluate Qwen3-30B-A3B-Instruct-2507 at keep ratio $0.1$ on a six-task LongBench subset (Qasper, TREC, QMSUM, 2WikiMQA, MuSiQue, and SAMSUM). KVZip and MoEZip use the same HQQ INT8 cache-quantization configuration. Within each method, retained positions are fixed between BF16 and INT8. Routing-aware anchors and base-selected entries are quantized identically. The implementation uses Transformers 5.0.0, \texttt{DynamicCache} for BF16, and the HQQ-backed \texttt{HQQQuantizedCache} for INT8. Keys and values use 8-bit quantization with \texttt{q\_group\_size=64}.

\begin{table}[t]
\centering
\small
\setlength{\tabcolsep}{7pt}
\begin{tabular}{lrrr}
\toprule
\textbf{Method / quantity} & \textbf{BF16}
& \textbf{INT8} & \textbf{Change} \\
\midrule
KVZip  & 12.8 & 12.5 & $-0.3$ \\
MoEZip & \textbf{26.4} & \textbf{25.8} & $-0.6$ \\
\midrule
MoEZip $-$ KVZip & $+13.6$ & $+13.3$ & $-0.3$ \\
\bottomrule
\end{tabular}
\caption{\textbf{Composition with INT8 KV quantization.}
Six-task LongBench macro scores at keep ratio $0.1$.
Within each method, BF16 and INT8 retain the same context positions;
anchors receive the same quantization treatment as other retained entries.}
\label{tab:int8-composition}
\end{table}

INT8 reduces MoEZip's macro by $0.6$ points.
Its advantage over KVZip changes from $13.6$ to $13.3$ points,
preserving approximately $97.8\%$ of the BF16 gain.
This supports compatibility with the evaluated INT8 configuration;
lower-bit quantization is not evaluated here.

\subsection{Cross-Layer Routing Associations and Anchor Examples}
\label{app:cross-layer-association}
\label{appendix:cross_layer_association}

\paragraph{Adjacent-layer associations.} We analyze Qwen3-30B-A3B-Instruct-2507 on 12 LongBench tasks with 30 samples per task and 128 globally shared routing-aware anchors. The same reference answer is teacher-forced under full-KV and
compressed-KV inference.
Let $F_{t,\ell}$ indicate that the selected expert assignment for
answer token $t$ differs between these runs at MoE layer $\ell$.

We use a KV keep ratio of $0.1$. We pool all evaluated answer-token observations across tasks and adjacent MoE-layer pairs, then compute each conditional probability
from the pooled counts.

\begin{table}[t]
\centering
\small
\begin{tabular}{lc}
\toprule
\textbf{Conditional probability} & \textbf{Value} \\
\midrule
$\Pr(F_{t,\ell+1}=1\mid F_{t,\ell}=1)$ & .448 \\
$\Pr(F_{t,\ell+1}=1\mid F_{t,\ell}=0)$ & .270 \\
\bottomrule
\end{tabular}
\caption{\textbf{Association between routing flips at adjacent MoE layers.} Results at KV keep ratio $0.1$. Conditional probabilities are computed from pooled answer-token
observations across all evaluated tasks and adjacent MoE-layer pairs.}
\label{tab:cross-layer-association}
\end{table}

The probability of a flip at the next layer is higher when a flip
occurs at the current layer.
This establishes an association, but does not isolate causal propagation
from shared sensitivity to the same KV perturbation. It also does not by itself explain the advantage of globally shared anchors.

\paragraph{Qualitative anchor examples.}
Selected anchors include answer-relevant entities and factual details, such as \texttt{Pelham Glassford}, \texttt{Meg Ryan},
\texttt{John Chilton}, and numerical or temporal expressions.
Other examples include schema markers such as \texttt{Type:},
\texttt{Question:}, and \texttt{yes/no/unanswerable}, as well as evidence
delimiters such as \texttt{Passage 5:}, \texttt{Paragraph 1:},
and \texttt{Article:}.
These examples illustrate positions receiving global protection;
they are qualitative observations rather than a quantitative
classification or causal explanation of anchor utility.

%% file: checklist.tex
\clearpage
\section*{NeurIPS Paper Checklist}

\begin{enumerate}

\item {\bf Claims}
    \item[] Question: Do the main claims made in the abstract and introduction accurately reflect the paper's contributions and scope?
    \item[] Answer: \answerYes{} 
    \item[] Justification: \textbf{The abstract and introduction state the paper's main claims: identifying routing instability as a MoE-specific failure mode of KV compression and proposing MoEZip to improve the accuracy--efficiency trade-off. The scope is limited to sparse MoE LLM inference under KV cache compression, and the main empirical claims are supported by experiments on LongBench, NIAH, and RULER.}
    \item[] Guidelines:
    \begin{itemize}
        \item The answer \answerNA{} means that the abstract and introduction do not include the claims made in the paper.
        \item The abstract and/or introduction should clearly state the claims made, including the contributions made in the paper and important assumptions and limitations. A \answerNo{} or \answerNA{} answer to this question will not be perceived well by the reviewers. 
        \item The claims made should match theoretical and experimental results, and reflect how much the results can be expected to generalize to other settings. 
        \item It is fine to include aspirational goals as motivation as long as it is clear that these goals are not attained by the paper. 
    \end{itemize}

\item {\bf Limitations}
    \item[] Question: Does the paper discuss the limitations of the work performed by the authors?
    \item[] Answer: \answerYes{} 
    \item[] Justification: \textbf{We discuss limitations in Section~\ref{sec:conclusion}, including the need to access router outputs and token-to-context attention statistics, and the fact that Fisher trace is a practical proxy rather than an exact predictor of eviction-induced perturbations.}
    \item[] Guidelines:
    \begin{itemize}
        \item The answer \answerNA{} means that the paper has no limitation while the answer \answerNo{} means that the paper has limitations, but those are not discussed in the paper. 
        \item The authors are encouraged to create a separate ``Limitations'' section in their paper.
        \item The paper should point out any strong assumptions and how robust the results are to violations of these assumptions (e.g., independence assumptions, noiseless settings, model well-specification, asymptotic approximations only holding locally). The authors should reflect on how these assumptions might be violated in practice and what the implications would be.
        \item The authors should reflect on the scope of the claims made, e.g., if the approach was only tested on a few datasets or with a few runs. In general, empirical results often depend on implicit assumptions, which should be articulated.
        \item The authors should reflect on the factors that influence the performance of the approach. For example, a facial recognition algorithm may perform poorly when image resolution is low or images are taken in low lighting. Or a speech-to-text system might not be used reliably to provide closed captions for online lectures because it fails to handle technical jargon.
        \item The authors should discuss the computational efficiency of the proposed algorithms and how they scale with dataset size.
        \item If applicable, the authors should discuss possible limitations of their approach to address problems of privacy and fairness.
        \item While the authors might fear that complete honesty about limitations might be used by reviewers as grounds for rejection, a worse outcome might be that reviewers discover limitations that aren't acknowledged in the paper. The authors should use their best judgment and recognize that individual actions in favor of transparency play an important role in developing norms that preserve the integrity of the community. Reviewers will be specifically instructed to not penalize honesty concerning limitations.
    \end{itemize}

\item {\bf Theory assumptions and proofs}
    \item[] Question: For each theoretical result, does the paper provide the full set of assumptions and a complete (and correct) proof?
    \item[] Answer: \answerYes{} 
    \item[] Justification: \textbf{Proposition~\ref{prop:fisher_trace} states the assumptions on the perturbation distribution and local KL expansion, and the full proof is provided in Appendix~\ref{appendix:router_fisher}. Additional derivations for worst-case curvature and router-specific Fisher forms are provided in the appendix.}

    \item[] Guidelines:
    \begin{itemize}
        \item The answer \answerNA{} means that the paper does not include theoretical results. 
        \item All the theorems, formulas, and proofs in the paper should be numbered and cross-referenced.
        \item All assumptions should be clearly stated or referenced in the statement of any theorems.
        \item The proofs can either appear in the main paper or the supplemental material, but if they appear in the supplemental material, the authors are encouraged to provide a short proof sketch to provide intuition. 
        \item Inversely, any informal proof provided in the core of the paper should be complemented by formal proofs provided in appendix or supplemental material.
        \item Theorems and Lemmas that the proof relies upon should be properly referenced. 
    \end{itemize}

    \item {\bf Experimental result reproducibility}
    \item[] Question: Does the paper fully disclose all the information needed to reproduce the main experimental results of the paper to the extent that it affects the main claims and/or conclusions of the paper (regardless of whether the code and data are provided or not)?
    \item[] Answer: \answerYes{} 
    \item[] Justification: \textbf{The paper describes the evaluated models, datasets, baselines, KV keep ratios, compression protocol, decoding setup, and implementation details in Section~\ref{sec:experiment} and Appendix~\ref{appendix:implementation}. The method equations specify how routing-aware scores, adaptive anchor budgets, and final KV selection are computed.}
    \item[] Guidelines:
    \begin{itemize}
        \item The answer \answerNA{} means that the paper does not include experiments.
        \item If the paper includes experiments, a \answerNo{} answer to this question will not be perceived well by the reviewers: Making the paper reproducible is important, regardless of whether the code and data are provided or not.
        \item If the contribution is a dataset and\slash or model, the authors should describe the steps taken to make their results reproducible or verifiable. 
        \item Depending on the contribution, reproducibility can be accomplished in various ways. For example, if the contribution is a novel architecture, describing the architecture fully might suffice, or if the contribution is a specific model and empirical evaluation, it may be necessary to either make it possible for others to replicate the model with the same dataset, or provide access to the model. In general. releasing code and data is often one good way to accomplish this, but reproducibility can also be provided via detailed instructions for how to replicate the results, access to a hosted model (e.g., in the case of a large language model), releasing of a model checkpoint, or other means that are appropriate to the research performed.
        \item While NeurIPS does not require releasing code, the conference does require all submissions to provide some reasonable avenue for reproducibility, which may depend on the nature of the contribution. For example
        \begin{enumerate}
            \item If the contribution is primarily a new algorithm, the paper should make it clear how to reproduce that algorithm.
            \item If the contribution is primarily a new model architecture, the paper should describe the architecture clearly and fully.
            \item If the contribution is a new model (e.g., a large language model), then there should either be a way to access this model for reproducing the results or a way to reproduce the model (e.g., with an open-source dataset or instructions for how to construct the dataset).
            \item We recognize that reproducibility may be tricky in some cases, in which case authors are welcome to describe the particular way they provide for reproducibility. In the case of closed-source models, it may be that access to the model is limited in some way (e.g., to registered users), but it should be possible for other researchers to have some path to reproducing or verifying the results.
        \end{enumerate}
    \end{itemize}

\item {\bf Open access to data and code}
    \item[] Question: Does the paper provide open access to the data and code, with sufficient instructions to faithfully reproduce the main experimental results, as described in supplemental material?
    \item[] Answer: \answerYes{} 
    \item[] Justification: \textbf{We will provide anonymized code and scripts for reproducing the main experiments and baselines in the supplemental material. The experiments use publicly available benchmarks and open model checkpoints, and data access/preparation details are described in Appendix~\ref{appendix:implementation}.}
    \item[] Guidelines:
    \begin{itemize}
        \item The answer \answerNA{} means that paper does not include experiments requiring code.
        \item Please see the NeurIPS code and data submission guidelines (\url{https://neurips.cc/public/guides/CodeSubmissionPolicy}) for more details.
        \item While we encourage the release of code and data, we understand that this might not be possible, so \answerNo{} is an acceptable answer. Papers cannot be rejected simply for not including code, unless this is central to the contribution (e.g., for a new open-source benchmark).
        \item The instructions should contain the exact command and environment needed to run to reproduce the results. See the NeurIPS code and data submission guidelines (\url{https://neurips.cc/public/guides/CodeSubmissionPolicy}) for more details.
        \item The authors should provide instructions on data access and preparation, including how to access the raw data, preprocessed data, intermediate data, and generated data, etc.
        \item The authors should provide scripts to reproduce all experimental results for the new proposed method and baselines. If only a subset of experiments are reproducible, they should state which ones are omitted from the script and why.
        \item At submission time, to preserve anonymity, the authors should release anonymized versions (if applicable).
        \item Providing as much information as possible in supplemental material (appended to the paper) is recommended, but including URLs to data and code is permitted.
    \end{itemize}

\item {\bf Experimental setting/details}
    \item[] Question: Does the paper specify all the training and test details (e.g., data splits, hyperparameters, how they were chosen, type of optimizer) necessary to understand the results?
    \item[] Answer: \answerYes{} 
    \item[] Justification: \textbf{The paper specifies the evaluation benchmarks, model backbones, baselines, KV keep ratios, compression setup, and evaluation metrics in Section~\ref{sec:experiment}. Since this is an inference-time KV compression method, no model training or optimizer is involved; implementation and hyperparameter details are provided in Appendix~\ref{appendix:implementation}.}
    \item[] Guidelines:
    \begin{itemize}
        \item The answer \answerNA{} means that the paper does not include experiments.
        \item The experimental setting should be presented in the core of the paper to a level of detail that is necessary to appreciate the results and make sense of them.
        \item The full details can be provided either with the code, in appendix, or as supplemental material.
    \end{itemize}

\item {\bf Experiment statistical significance}
    \item[] Question: Does the paper report error bars suitably and correctly defined or other appropriate information about the statistical significance of the experiments?
    \item[] Answer: \answerYes{} 
    \item[] Justification: \textbf{We report paired task-level bootstrap confidence intervals for representative LongBench and RULER aggregate results in Appendix~\ref{appendix:CI}. We compute these intervals by resampling tasks with replacement and recomputing the paired mean score differences between MoEZip and each baseline over 10{,}000 bootstrap samples. We do not perform repeated full inference runs due to the computational cost of evaluating multiple large MoE LLMs across many tasks, baselines, and KV-cache ratios.}
    \item[] Guidelines:
    \begin{itemize}
        \item The answer \answerNA{} means that the paper does not include experiments.
        \item The authors should answer \answerYes{} if the results are accompanied by error bars, confidence intervals, or statistical significance tests, at least for the experiments that support the main claims of the paper.
        \item The factors of variability that the error bars are capturing should be clearly stated (for example, train/test split, initialization, random drawing of some parameter, or overall run with given experimental conditions).
        \item The method for calculating the error bars should be explained (closed form formula, call to a library function, bootstrap, etc.)
        \item The assumptions made should be given (e.g., Normally distributed errors).
        \item It should be clear whether the error bar is the standard deviation or the standard error of the mean.
        \item It is OK to report 1-sigma error bars, but one should state it. The authors should preferably report a 2-sigma error bar than state that they have a 96\% CI, if the hypothesis of Normality of errors is not verified.
        \item For asymmetric distributions, the authors should be careful not to show in tables or figures symmetric error bars that would yield results that are out of range (e.g., negative error rates).
        \item If error bars are reported in tables or plots, the authors should explain in the text how they were calculated and reference the corresponding figures or tables in the text.
    \end{itemize}

\item {\bf Experiments compute resources}
    \item[] Question: For each experiment, does the paper provide sufficient information on the computer resources (type of compute workers, memory, time of execution) needed to reproduce the experiments?
    \item[] Answer: \answerYes{} 
    \item[] Justification: \textbf{We report the compute environment in Appendix~\ref{appendix:resource}. All experiments were conducted on NVIDIA A100 80GB GPUs; we provide the number of GPUs used per run and an approximate GPU-hour estimate for the reported evaluations.}
    \item[] Guidelines:
    \begin{itemize}
        \item The answer \answerNA{} means that the paper does not include experiments.
        \item The paper should indicate the type of compute workers CPU or GPU, internal cluster, or cloud provider, including relevant memory and storage.
        \item The paper should provide the amount of compute required for each of the individual experimental runs as well as estimate the total compute. 
        \item The paper should disclose whether the full research project required more compute than the experiments reported in the paper (e.g., preliminary or failed experiments that didn't make it into the paper). 
    \end{itemize}
    
\item {\bf Code of ethics}
    \item[] Question: Does the research conducted in the paper conform, in every respect, with the NeurIPS Code of Ethics \url{https://neurips.cc/public/EthicsGuidelines}?
    \item[] Answer: \answerYes{} 
    \item[] Justification: \textbf{The research uses publicly available models and benchmarks for inference-time evaluation and does not involve human subjects, private data, or data collection from users. We have reviewed the NeurIPS Code of Ethics and believe the work conforms to it.}
    \item[] Guidelines:
    \begin{itemize}
        \item The answer \answerNA{} means that the authors have not reviewed the NeurIPS Code of Ethics.
        \item If the authors answer \answerNo, they should explain the special circumstances that require a deviation from the Code of Ethics.
        \item The authors should make sure to preserve anonymity (e.g., if there is a special consideration due to laws or regulations in their jurisdiction).
    \end{itemize}

\item {\bf Broader impacts}
    \item[] Question: Does the paper discuss both potential positive societal impacts and negative societal impacts of the work performed?
    \item[] Answer: \answerYes{} 
    \item[] Justification: \textbf{The work may reduce memory and compute requirements for long-context MoE inference, which can improve accessibility and energy efficiency. As with general LLM efficiency improvements, it may also lower the cost of deploying LLMs for both beneficial and potentially harmful applications. We discuss this dual-use consideration in Section~\ref{sec:conclusion}.}
    \item[] Guidelines:
    \begin{itemize}
        \item The answer \answerNA{} means that there is no societal impact of the work performed.
        \item If the authors answer \answerNA{} or \answerNo, they should explain why their work has no societal impact or why the paper does not address societal impact.
        \item Examples of negative societal impacts include potential malicious or unintended uses (e.g., disinformation, generating fake profiles, surveillance), fairness considerations (e.g., deployment of technologies that could make decisions that unfairly impact specific groups), privacy considerations, and security considerations.
        \item The conference expects that many papers will be foundational research and not tied to particular applications, let alone deployments. However, if there is a direct path to any negative applications, the authors should point it out. For example, it is legitimate to point out that an improvement in the quality of generative models could be used to generate Deepfakes for disinformation. On the other hand, it is not needed to point out that a generic algorithm for optimizing neural networks could enable people to train models that generate Deepfakes faster.
        \item The authors should consider possible harms that could arise when the technology is being used as intended and functioning correctly, harms that could arise when the technology is being used as intended but gives incorrect results, and harms following from (intentional or unintentional) misuse of the technology.
        \item If there are negative societal impacts, the authors could also discuss possible mitigation strategies (e.g., gated release of models, providing defenses in addition to attacks, mechanisms for monitoring misuse, mechanisms to monitor how a system learns from feedback over time, improving the efficiency and accessibility of ML).
    \end{itemize}
    
\item {\bf Safeguards}
    \item[] Question: Does the paper describe safeguards that have been put in place for responsible release of data or models that have a high risk for misuse (e.g., pre-trained language models, image generators, or scraped datasets)?
    \item[] Answer: \answerNA{} 
    \item[] Justification: \textbf{The paper does not release new pretrained language models, image generators, scraped datasets, or other high-risk assets. The proposed method is an inference-time KV cache compression algorithm evaluated on existing public models and benchmarks.}
    \item[] Guidelines:
    \begin{itemize}
        \item The answer \answerNA{} means that the paper poses no such risks.
        \item Released models that have a high risk for misuse or dual-use should be released with necessary safeguards to allow for controlled use of the model, for example by requiring that users adhere to usage guidelines or restrictions to access the model or implementing safety filters. 
        \item Datasets that have been scraped from the Internet could pose safety risks. The authors should describe how they avoided releasing unsafe images.
        \item We recognize that providing effective safeguards is challenging, and many papers do not require this, but we encourage authors to take this into account and make a best faith effort.
    \end{itemize}

\item {\bf Licenses for existing assets}
    \item[] Question: Are the creators or original owners of assets (e.g., code, data, models), used in the paper, properly credited and are the license and terms of use explicitly mentioned and properly respected?
    \item[] Answer: \answerYes{} 
    \item[] Justification: \textbf{We cite the papers corresponding to the models, datasets, and baseline methods used in the experiments. The experiments rely on publicly available benchmarks and model checkpoints, and we follow their intended research-use settings.}
    \item[] Guidelines:
    \begin{itemize}
        \item The answer \answerNA{} means that the paper does not use existing assets.
        \item The authors should cite the original paper that produced the code package or dataset.
        \item The authors should state which version of the asset is used and, if possible, include a URL.
        \item The name of the license (e.g., CC-BY 4.0) should be included for each asset.
        \item For scraped data from a particular source (e.g., website), the copyright and terms of service of that source should be provided.
        \item If assets are released, the license, copyright information, and terms of use in the package should be provided. For popular datasets, \url{paperswithcode.com/datasets} has curated licenses for some datasets. Their licensing guide can help determine the license of a dataset.
        \item For existing datasets that are re-packaged, both the original license and the license of the derived asset (if it has changed) should be provided.
        \item If this information is not available online, the authors are encouraged to reach out to the asset's creators.
    \end{itemize}

\item {\bf New assets}
    \item[] Question: Are new assets introduced in the paper well documented and is the documentation provided alongside the assets?
    \item[] Answer: \answerNA{} 
    \item[] Justification: \textbf{The paper does not introduce a new dataset, benchmark, or pretrained model. If code is released, it will be documented with instructions for reproducing the reported inference-time experiments.}

    \item[] Guidelines:
    \begin{itemize}
        \item The answer \answerNA{} means that the paper does not release new assets.
        \item Researchers should communicate the details of the dataset\slash code\slash model as part of their submissions via structured templates. This includes details about training, license, limitations, etc. 
        \item The paper should discuss whether and how consent was obtained from people whose asset is used.
        \item At submission time, remember to anonymize your assets (if applicable). You can either create an anonymized URL or include an anonymized zip file.
    \end{itemize}

\item {\bf Crowdsourcing and research with human subjects}
    \item[] Question: For crowdsourcing experiments and research with human subjects, does the paper include the full text of instructions given to participants and screenshots, if applicable, as well as details about compensation (if any)? 
    \item[] Answer: \answerNA{} 
    \item[] Justification: \textbf{The paper does not involve crowdsourcing, user studies, or research with human subjects.}
    \item[] Guidelines:
    \begin{itemize}
        \item The answer \answerNA{} means that the paper does not involve crowdsourcing nor research with human subjects.
        \item Including this information in the supplemental material is fine, but if the main contribution of the paper involves human subjects, then as much detail as possible should be included in the main paper. 
        \item According to the NeurIPS Code of Ethics, workers involved in data collection, curation, or other labor should be paid at least the minimum wage in the country of the data collector. 
    \end{itemize}

\item {\bf Institutional review board (IRB) approvals or equivalent for research with human subjects}
    \item[] Question: Does the paper describe potential risks incurred by study participants, whether such risks were disclosed to the subjects, and whether Institutional Review Board (IRB) approvals (or an equivalent approval/review based on the requirements of your country or institution) were obtained?
    \item[] Answer: \answerNA{} 
    \item[] Justification: \textbf{The paper does not involve crowdsourcing, user studies, or research with human subjects, so IRB approval is not applicable.}
    \item[] Guidelines:
    \begin{itemize}
        \item The answer \answerNA{} means that the paper does not involve crowdsourcing nor research with human subjects.
        \item Depending on the country in which research is conducted, IRB approval (or equivalent) may be required for any human subjects research. If you obtained IRB approval, you should clearly state this in the paper. 
        \item We recognize that the procedures for this may vary significantly between institutions and locations, and we expect authors to adhere to the NeurIPS Code of Ethics and the guidelines for their institution. 
        \item For initial submissions, do not include any information that would break anonymity (if applicable), such as the institution conducting the review.
    \end{itemize}

\item {\bf Declaration of LLM usage}
    \item[] Question: Does the paper describe the usage of LLMs if it is an important, original, or non-standard component of the core methods in this research? Note that if the LLM is used only for writing, editing, or formatting purposes and does \emph{not} impact the core methodology, scientific rigor, or originality of the research, declaration is not required.
    \item[] Answer: \answerNA{} 
    \item[] Justification: \textbf{The core method does not use LLMs as an important, original, or non-standard component beyond evaluating existing LLMs as target models for inference-time KV cache compression. Any use of LLM tools for writing, editing, or formatting, if applicable, did not affect the scientific methodology, results, or originality of the work.}
    \item[] Guidelines:
    \begin{itemize}
        \item The answer \answerNA{} means that the core method development in this research does not involve LLMs as any important, original, or non-standard components.
        \item Please refer to our LLM policy in the NeurIPS handbook for what should or should not be described.
    \end{itemize}

\end{enumerate}

%% file: paper.bib
@book{amari2000methods,
  title={Methods of information geometry},
  author={Amari, Shun-ichi and Nagaoka, Hiroshi},
  volume={191},
  year={2000},
  publisher={American Mathematical Soc.}
}

@article{li2024snapkv,
  title={Snapkv: Llm knows what you are looking for before generation},
  author={Li, Yuhong and Huang, Yingbing and Yang, Bowen and Venkitesh, Bharat and Locatelli, Acyr and Ye, Hanchen and Cai, Tianle and Lewis, Patrick and Chen, Deming},
  journal={Advances in Neural Information Processing Systems},
  volume={37},
  pages={22947--22970},
  year={2024}
}

@article{feng2024ada,
  title={Ada-kv: Optimizing kv cache eviction by adaptive budget allocation for efficient llm inference},
  author={Feng, Yuan and Lv, Junlin and Cao, Yukun and Xie, Xike and Zhou, S Kevin},
  journal={arXiv preprint arXiv:2407.11550},
  year={2024}
}

@article{kim2025kvzip,
  title={Kvzip: Query-agnostic kv cache compression with context reconstruction},
  author={Kim, Jang-Hyun and Kim, Jinuk and Kwon, Sangwoo and Lee, Jae W and Yun, Sangdoo and Song, Hyun Oh},
  journal={arXiv preprint arXiv:2505.23416},
  year={2025}
}

@article{cai2024pyramidkv,
  title={Pyramidkv: Dynamic kv cache compression based on pyramidal information funneling},
  author={Cai, Zefan and Zhang, Yichi and Gao, Bofei and Liu, Yuliang and Li, Yucheng and Liu, Tianyu and Lu, Keming and Xiong, Wayne and Dong, Yue and Hu, Junjie and others},
  journal={arXiv preprint arXiv:2406.02069},
  year={2024}
}

@inproceedings{bai2024longbench,
  title={Longbench: A bilingual, multitask benchmark for long context understanding},
  author={Bai, Yushi and Lv, Xin and Zhang, Jiajie and Lyu, Hongchang and Tang, Jiankai and Huang, Zhidian and Du, Zhengxiao and Liu, Xiao and Zeng, Aohan and Hou, Lei and others},
  booktitle={Proceedings of the 62nd annual meeting of the association for computational linguistics (volume 1: Long papers)},
  pages={3119--3137},
  year={2024}
}

@article{zhang2023h2o,
  title={H2o: Heavy-hitter oracle for efficient generative inference of large language models},
  author={Zhang, Zhenyu and Sheng, Ying and Zhou, Tianyi and Chen, Tianlong and Zheng, Lianmin and Cai, Ruisi and Song, Zhao and Tian, Yuandong and R{\'e}, Christopher and Barrett, Clark and others},
  journal={Advances in Neural Information Processing Systems},
  volume={36},
  pages={34661--34710},
  year={2023}
}

@article{xiao2023efficient,
  title={Efficient streaming language models with attention sinks},
  author={Xiao, Guangxuan and Tian, Yuandong and Chen, Beidi and Han, Song and Lewis, Mike},
  journal={arXiv preprint arXiv:2309.17453},
  year={2023}
}

@article{yang2024gated,
  title={Gated delta networks: Improving mamba2 with delta rule},
  author={Yang, Songlin and Kautz, Jan and Hatamizadeh, Ali},
  journal={arXiv preprint arXiv:2412.06464},
  year={2024}
}

@article{yang2024parallelizing,
  title={Parallelizing linear transformers with the delta rule over sequence length},
  author={Yang, Songlin and Wang, Bailin and Zhang, Yu and Shen, Yikang and Kim, Yoon},
  journal={Advances in neural information processing systems},
  volume={37},
  pages={115491--115522},
  year={2024}
}

@article{qin2024lightning,
  title={Lightning attention-2: A free lunch for handling unlimited sequence lengths in large language models},
  author={Qin, Zhen and Sun, Weigao and Li, Dong and Shen, Xuyang and Sun, Weixuan and Zhong, Yiran},
  journal={arXiv preprint arXiv:2401.04658},
  year={2024}
}

@inproceedings{yuan-etal-2025-native,
    title = "Native Sparse Attention: Hardware-Aligned and Natively Trainable Sparse Attention",
    author = "Yuan, Jingyang  and
      Gao, Huazuo  and
      Dai, Damai  and
      Luo, Junyu  and
      Zhao, Liang  and
      Zhang, Zhengyan  and
      Xie, Zhenda  and
      Wei, Yuxing  and
      Wang, Lean  and
      Xiao, Zhiping  and
      Wang, Yuqing  and
      Ruan, Chong  and
      Zhang, Ming  and
      Liang, Wenfeng  and
      Zeng, Wangding",
    editor = "Che, Wanxiang  and
      Nabende, Joyce  and
      Shutova, Ekaterina  and
      Pilehvar, Mohammad Taher",
    booktitle = "Proceedings of the 63rd Annual Meeting of the Association for Computational Linguistics (Volume 1: Long Papers)",
    month = jul,
    year = "2025",
    address = "Vienna, Austria",
    publisher = "Association for Computational Linguistics",
    url = "https://aclanthology.org/2025.acl-long.1126/",
    doi = "10.18653/v1/2025.acl-long.1126",
    pages = "23078--23097",
    ISBN = "979-8-89176-251-0",
}

@misc{deepseekai2025deepseekv32,
      title={DeepSeek-V3.2: Pushing the Frontier of Open Large Language Models}, 
      author={DeepSeek-AI},
      year={2025},
      eprint={2512.02556},
      archivePrefix={arXiv},
      primaryClass={cs.CL},
      url={https://arxiv.org/abs/2512.02556}, 
}

@inproceedings{zaheer2020bigbird,
  title={Big Bird: Transformers for Longer Sequences},
  author={Zaheer, Manzil and Guruganesh, Guru and Dubey, Avinava and Ainslie, Joshua and Alberti, Chris and Ontanon, Santiago and Pham, Philip and Ravula, Anirudh and Wang, Qifan and Yang, Li and Ahmed, Amr},
  booktitle={Advances in Neural Information Processing Systems (NeurIPS)},
  year={2020}
}

@inproceedings{
xiao2025duoattention,
title={DuoAttention: Efficient Long-Context {LLM} Inference with Retrieval and Streaming Heads},
author={Guangxuan Xiao and Jiaming Tang and Jingwei Zuo and junxian guo and Shang Yang and Haotian Tang and Yao Fu and Song Han},
booktitle={The Thirteenth International Conference on Learning Representations},
year={2025},
url={https://openreview.net/forum?id=cFu7ze7xUm}
}

@article{yang2025qwen3,
  title={Qwen3 technical report},
  author={Yang, An and Li, Anfeng and Yang, Baosong and Zhang, Beichen and Hui, Binyuan and Zheng, Bo and Yu, Bowen and Gao, Chang and Huang, Chengen and Lv, Chenxu and others},
  journal={arXiv preprint arXiv:2505.09388},
  year={2025}
}

@article{zeng2026glm,
  title={GLM-5: from Vibe Coding to Agentic Engineering},
  author={Zeng, Aohan and Lv, Xin and Hou, Zhenyu and Du, Zhengxiao and Zheng, Qinkai and Chen, Bin and Yin, Da and Ge, Chendi and Xie, Chengxing and Wang, Cunxiang and others},
  journal={arXiv preprint arXiv:2602.15763},
  year={2026}
}

@article{li2025minimax,
  title={Minimax-01: Scaling foundation models with lightning attention},
  author={Li, Aonian and Gong, Bangwei and Yang, Bo and Shan, Boji and Liu, Chang and Zhu, Cheng and Zhang, Chunhao and Guo, Congchao and Chen, Da and Li, Dong and others},
  journal={arXiv preprint arXiv:2501.08313},
  year={2025}
}

@article{zeng2025glm,
  title={Glm-4.5: Agentic, reasoning, and coding (arc) foundation models},
  author={Zeng, Aohan and Lv, Xin and Zheng, Qinkai and Hou, Zhenyu and Chen, Bin and Xie, Chengxing and Wang, Cunxiang and Yin, Da and Zeng, Hao and Zhang, Jiajie and others},
  journal={arXiv preprint arXiv:2508.06471},
  year={2025}
}

@article{fu2024data,
  title={Data engineering for scaling language models to 128k context},
  author={Fu, Yao and Panda, Rameswar and Niu, Xinyao and Yue, Xiang and Hajishirzi, Hannaneh and Kim, Yoon and Peng, Hao},
  journal={arXiv preprint arXiv:2402.10171},
  year={2024}
}

@article{hsieh2024ruler,
  title={RULER: What's the real context size of your long-context language models?},
  author={Hsieh, Cheng-Ping and Sun, Simeng and Kriman, Samuel and Acharya, Shantanu and Rekesh, Dima and Jia, Fei and Zhang, Yang and Ginsburg, Boris},
  journal={arXiv preprint arXiv:2404.06654},
  year={2024}
}

@article{achiam2023gpt,
  title={Gpt-4 technical report},
  author={Achiam, Josh and Adler, Steven and Agarwal, Sandhini and Ahmad, Lama and Akkaya, Ilge and Aleman, Florencia Leoni and Almeida, Diogo and Altenschmidt, Janko and Altman, Sam and Anadkat, Shyamal and others},
  journal={arXiv preprint arXiv:2303.08774},
  year={2023}
}

@article{comanici2025gemini,
  title={Gemini 2.5: Pushing the frontier with advanced reasoning, multimodality, long context, and next generation agentic capabilities},
  author={Comanici, Gheorghe and Bieber, Eric and Schaekermann, Mike and Pasupat, Ice and Sachdeva, Noveen and Dhillon, Inderjit and Blistein, Marcel and Ram, Ori and Zhang, Dan and Rosen, Evan and others},
  journal={arXiv preprint arXiv:2507.06261},
  year={2025}
}

@article{bai2022training,
  title={Training a helpful and harmless assistant with reinforcement learning from human feedback},
  author={Bai, Yuntao and Jones, Andy and Ndousse, Kamal and Askell, Amanda and Chen, Anna and DasSarma, Nova and Drain, Dawn and Fort, Stanislav and Ganguli, Deep and Henighan, Tom and others},
  journal={arXiv preprint arXiv:2204.05862},
  year={2022}
}

@article{brown2020language,
  title={Language models are few-shot learners},
  author={Brown, Tom and Mann, Benjamin and Ryder, Nick and Subbiah, Melanie and Kaplan, Jared D and Dhariwal, Prafulla and Neelakantan, Arvind and Shyam, Pranav and Sastry, Girish and Askell, Amanda and others},
  journal={Advances in neural information processing systems},
  volume={33},
  pages={1877--1901},
  year={2020}
}

@article{grattafiori2024llama,
  title={The llama 3 herd of models},
  author={Grattafiori, Aaron and Dubey, Abhimanyu and Jauhri, Abhinav and Pandey, Abhinav and Kadian, Abhishek and Al-Dahle, Ahmad and Letman, Aiesha and Mathur, Akhil and Schelten, Alan and Vaughan, Alex and others},
  journal={arXiv preprint arXiv:2407.21783},
  year={2024}
}

@article{kim2024openvla,
  title={Openvla: An open-source vision-language-action model},
  author={Kim, Moo Jin and Pertsch, Karl and Karamcheti, Siddharth and Xiao, Ted and Balakrishna, Ashwin and Nair, Suraj and Rafailov, Rafael and Foster, Ethan and Lam, Grace and Sanketi, Pannag and others},
  journal={arXiv preprint arXiv:2406.09246},
  year={2024}
}

@article{intelligence2025pi_,
  title={$pi\_\{0.5\}$: a Vision-Language-Action Model with Open-World Generalization},
  author={Intelligence, Physical and Black, Kevin and Brown, Noah and Darpinian, James and Dhabalia, Karan and Driess, Danny and Esmail, Adnan and Equi, Michael and Finn, Chelsea and Fusai, Niccolo and others},
  journal={arXiv preprint arXiv:2504.16054},
  year={2025}
}

@article{shinn2023reflexion,
  title={Reflexion: Language agents with verbal reinforcement learning},
  author={Shinn, Noah and Cassano, Federico and Gopinath, Ashwin and Narasimhan, Karthik and Yao, Shunyu},
  journal={Advances in neural information processing systems},
  volume={36},
  pages={8634--8652},
  year={2023}
}

@article{madaan2023self,
  title={Self-refine: Iterative refinement with self-feedback},
  author={Madaan, Aman and Tandon, Niket and Gupta, Prakhar and Hallinan, Skyler and Gao, Luyu and Wiegreffe, Sarah and Alon, Uri and Dziri, Nouha and Prabhumoye, Shrimai and Yang, Yiming and others},
  journal={Advances in neural information processing systems},
  volume={36},
  pages={46534--46594},
  year={2023}
}

@inproceedings{yao2022react,
  title={React: Synergizing reasoning and acting in language models},
  author={Yao, Shunyu and Zhao, Jeffrey and Yu, Dian and Du, Nan and Shafran, Izhak and Narasimhan, Karthik R and Cao, Yuan},
  booktitle={The eleventh international conference on learning representations},
  year={2022}
}

@article{fedus2022switch,
  title={Switch transformers: Scaling to trillion parameter models with simple and efficient sparsity},
  author={Fedus, William and Zoph, Barret and Shazeer, Noam},
  journal={Journal of Machine Learning Research},
  volume={23},
  number={120},
  pages={1--39},
  year={2022}
}

@inproceedings{du2022glam,
  title={Glam: Efficient scaling of language models with mixture-of-experts},
  author={Du, Nan and Huang, Yanping and Dai, Andrew M and Tong, Simon and Lepikhin, Dmitry and Xu, Yuanzhong and Krikun, Maxim and Zhou, Yanqi and Yu, Adams Wei and Firat, Orhan and others},
  booktitle={International conference on machine learning},
  pages={5547--5569},
  year={2022},
  organization={PMLR}
}

@article{liu2024deepseek,
  title={Deepseek-v3 technical report},
  author={Liu, Aixin and Feng, Bei and Xue, Bing and Wang, Bingxuan and Wu, Bochao and Lu, Chengda and Zhao, Chenggang and Deng, Chengqi and Zhang, Chenyu and Ruan, Chong and others},
  journal={arXiv preprint arXiv:2412.19437},
  year={2024}
}

@article{jimenez2023swe,
  title={Swe-bench: Can language models resolve real-world github issues?},
  author={Jimenez, Carlos E and Yang, John and Wettig, Alexander and Yao, Shunyu and Pei, Kexin and Press, Ofir and Narasimhan, Karthik},
  journal={arXiv preprint arXiv:2310.06770},
  year={2023}
}

@article{zhou2023webarena,
  title={Webarena: A realistic web environment for building autonomous agents},
  author={Zhou, Shuyan and Xu, Frank F and Zhu, Hao and Zhou, Xuhui and Lo, Robert and Sridhar, Abishek and Cheng, Xianyi and Ou, Tianyue and Bisk, Yonatan and Fried, Daniel and others},
  journal={arXiv preprint arXiv:2307.13854},
  year={2023}
}

@article{lewis2020retrieval,
  title={Retrieval-augmented generation for knowledge-intensive nlp tasks},
  author={Lewis, Patrick and Perez, Ethan and Piktus, Aleksandra and Petroni, Fabio and Karpukhin, Vladimir and Goyal, Naman and K{\"u}ttler, Heinrich and Lewis, Mike and Yih, Wen-tau and Rockt{\"a}schel, Tim and others},
  journal={Advances in neural information processing systems},
  volume={33},
  pages={9459--9474},
  year={2020}
}

@article{zheng2024gpt,
  title={Gpt-4v (ision) is a generalist web agent, if grounded},
  author={Zheng, Boyuan and Gou, Boyu and Kil, Jihyung and Sun, Huan and Su, Yu},
  journal={arXiv preprint arXiv:2401.01614},
  year={2024}
}

@article{liu2023libero,
  title={Libero: Benchmarking knowledge transfer for lifelong robot learning},
  author={Liu, Bo and Zhu, Yifeng and Gao, Chongkai and Feng, Yihao and Liu, Qiang and Zhu, Yuke and Stone, Peter},
  journal={Advances in Neural Information Processing Systems},
  volume={36},
  pages={44776--44791},
  year={2023}
}

@article{schick2023toolformer,
  title={Toolformer: Language models can teach themselves to use tools},
  author={Schick, Timo and Dwivedi-Yu, Jane and Dess{\`\i}, Roberto and Raileanu, Roberta and Lomeli, Maria and Hambro, Eric and Zettlemoyer, Luke and Cancedda, Nicola and Scialom, Thomas},
  journal={Advances in neural information processing systems},
  volume={36},
  pages={68539--68551},
  year={2023}
}

@article{zandieh2025turboquant,
  title={Turboquant: Online vector quantization with near-optimal distortion rate},
  author={Zandieh, Amir and Daliri, Majid and Hadian, Majid and Mirrokni, Vahab},
  journal={arXiv preprint arXiv:2504.19874},
  year={2025}
}

@article{team2025kimi,
  title={Kimi k2: Open agentic intelligence},
  author={Team, Kimi and Bai, Yifan and Bao, Yiping and Charles, Y and Chen, Cheng and Chen, Guanduo and Chen, Haiting and Chen, Huarong and Chen, Jiahao and Chen, Ningxin and others},
  journal={arXiv preprint arXiv:2507.20534},
  year={2025}
}

@book{horn2012matrix,
  title={Matrix analysis},
  author={Horn, Roger A and Johnson, Charles R},
  year={2012},
  publisher={Cambridge university press}
}

@article{hyvarinen2005estimation,
  title={Estimation of non-normalized statistical models by score matching.},
  author={Hyv{\"a}rinen, Aapo and Dayan, Peter},
  journal={Journal of Machine Learning Research},
  volume={6},
  number={4},
  year={2005}
}

@misc{qwen25,
  title         = {Qwen2.5 Technical Report},
  author        = {Qwen and Yang, An and Yang, Baosong and Zhang, Beichen and Hui, Binyuan and Zheng, Bo and Yu, Bowen and Li, Chengyuan and Liu, Dayiheng and Huang, Fei and Wei, Haoran and Lin, Huan and Yang, Jian and Tu, Jianhong and Zhang, Jianwei and Yang, Jianxin and Yang, Jiaxi and Zhou, Jingren and Lin, Junyang and Dang, Kai and Lu, Keming and Bao, Keqin and Yang, Kexin and Yu, Le and Li, Mei and Xue, Mingfeng and Zhang, Pei and Zhu, Qin and Men, Rui and Lin, Runji and Li, Tianhao and Tang, Tianyi and Xia, Tingyu and Ren, Xingzhang and Ren, Xuancheng and Fan, Yang and Su, Yang and Zhang, Yichang and Wan, Yu and Liu, Yuqiong and Cui, Zeyu and Zhang, Zhenru and Qiu, Zihan},
  year          = {2024},
  eprint        = {2412.15115},
  archivePrefix = {arXiv},
  primaryClass  = {cs.CL},
  url           = {https://arxiv.org/abs/2412.15115}
}
